\documentclass{article}
\usepackage{graphicx}
\usepackage{pdfpages}
\usepackage{hyperref}
\usepackage{times}
\usepackage{xcolor}
\usepackage{amsmath,amssymb}
\usepackage{bbm}
\usepackage{enumitem}
\usepackage{booktabs}  

\usepackage{microtype}
\usepackage{graphicx}
\usepackage{subcaption}
\usepackage{booktabs} 

\usepackage{hyperref}

\usepackage[preprint]{icml2026}

\usepackage{amsmath}
\usepackage{amssymb}
\usepackage{mathtools}
\usepackage{amsthm}

\usepackage{cleveref}

\theoremstyle{plain}
\newtheorem{theorem}{Theorem}[section]
\newtheorem{proposition}[theorem]{Proposition}
\newtheorem{lemma}[theorem]{Lemma}

\theoremstyle{definition}

\theoremstyle{remark}
\newtheorem{remark}[theorem]{Remark}

\usepackage[textsize=tiny]{todonotes}

\icmltitlerunning{Regularized policy gradient for MAB}

\newcommand{\R}{\mathbb R}

\newcommand{\E}{\mathbb E}

\newcommand{\one}{\mathbbm{1}}

\newcommand{\Pibar}{\bar{\Pi}}

\newcommand{\Dcal}{\mathcal{D}}
\newcommand{\Fcal}{\mathcal{F}}
\newcommand{\Gcal}{\mathcal{G}}
\newcommand{\Hcal}{\mathcal{H}}
\newcommand{\Ocal}{\mathcal{O}}
\newcommand{\Pcal}{\mathcal{P}}
\newcommand{\Rcal}{\mathcal{R}}

\newcommand{\Lcal}{\mathcal{L}}

\begin{document}

\twocolumn[
  \icmltitle{Vanishing L2 regularization 
  for the softmax Multi Armed Bandit}



  \icmlsetsymbol{equal}{*}

  \begin{icmlauthorlist}
    \icmlauthor{\c{S}tefana-Lucia Ani\c{t}a}{equal,ceremade}
    \icmlauthor{Gabriel Turinici}{equal,acadiasi}
  \end{icmlauthorlist}

  \icmlaffiliation{ceremade}{CEREMADE, Universit\'e Paris Dauphine - PSL, CNRS, Paris, France}
  \icmlaffiliation{acadiasi}{`Octav Mayer'' Institute of Mathematics of the Romanian Academy, Bd. Carol I 8, Ia\c{s}i 700505, Romania }

  \icmlcorrespondingauthor{Gabriel Turinici}{gabriel.turinici@dauphine.fr}

  \icmlkeywords{Reinforcement Learning, Multi Armed Bandit, 
  policy gradient regularized  multi armed bandit,
  regularized multi armed bandit, 
  L2 regularized multi armed bandit, 
  Stochastic Gradient Descent Algorithm, Policy Gradient, Regularized Policy Gradients, Proximal Policy Optimization}

  \vskip 0.3in
]

\printAffiliationsAndNotice{}  
\begin{abstract}
Multi Armed Bandit (MAB) algorithms are a cornerstone of reinforcement learning and have been studied both theoretically and numerically. One of the most commonly used implementation uses a softmax mapping to prescribe the optimal policy
and served as the foundation for downstream algorithms, including REINFORCE. Distinct from vanilla approaches, we consider here the L2 regularized softmax policy gradient where a quadratic term is subtracted from the mean reward. Previous studies 
exploiting convexity 
failed to identify a suitable theoretical framework to analyze its convergence when the regularization parameter vanishes. We prove here theoretical convergence results and confirm empirically that 
this regime makes the L2 regularization numerically advantageous on standard benchmarks.
\end{abstract}
\section{Introduction}

Backed by efficient practical achievements in diverse domains such as 
video games \cite{mnih_human_level_rl_2015},
game playing (e.g., Go \cite{silver_rl_mastering_go_2016}),  
conversational AI (e.g., ChatGPT \cite{openai2024gpt4}),
 recommender systems \cite{rl_recommender22},
 wifi antenna placement \cite{muMAB_wifi_antenna2018}, 
autonomous driving \cite{bojarski2016end_rl_cars} and
healthcare \cite{RL_medecine1,RL_medecine2},
 Reinforcement Learning (RL) algorithms emerged as promising field of ongoing research. Among the various frameworks within RL, the Multi-Armed Bandit (MAB) model \cite{contextual_mab,slivkins_introduction_mab_2019} stands out for its extensive use in both theoretical analysis and practical applications.

Our attention is directed toward a specific procedure, the 
softmax parameterized policy gradient, cf. \cite{sutton_reinforcement_2018} (Sect. 2.8 \& Chap. 13). But, distinct from this standard setting, we also include $L2$ regularization on the preference vector (see notations in the following section) and investigate both theoretical convergence and numerical performance.

The outline of the work is the following : we recall some 
results from the literature in this section and introduce the main notations in Section \ref{sec:notations}. We state the convergence results in Section \ref{sec:cv_proof}  which are then illustrated numerically in Section~\ref{sec:numerical}. Finally, Section~\ref{sec:conclusion} is devoted to concluding remarks and the Appendix contains all the proofs and further numerical results.

\subsection{Brief literature review}

Policy gradient algorithms have achieved notable success across various reinforcement learning applications and have been adapted to ensure stable convergence. Prominent examples include the log-barrier penalized REINFORCE algorithm \cite{reinforce_cv_proof_Kim_Boyd_2021}, trust-region policy optimization (TRPO) \cite{trpo15}, and proximal policy optimization (PPO)—OpenAI’s standard reinforcement learning framework. All of these methods employ regularization techniques to constrain policy updates and enhance learning stability. In this work, we explore an alternative regularization approach, focusing on its application within the Multi-Armed Bandit (MAB) framework.

Although policy gradient algorithms perform well empirically, their theoretical convergence properties for MABs have only recently been clarified. Fazel et al. \cite{fazel_global_2018} established high-probability convergence results for stochastic gradient methods in linear quadratic regulator problems. Building on this, Agarwal et al. \cite{pmlr_v125_agarwal20a} analyzed convergence within a general Markov process framework and, for the softmax parameterization studied here, considered three approaches: basic policy gradient descent, entropy-regularized policy gradient, and the natural policy gradient method, proving global optimality for the latter.  Still in the context of entropy‑regularized multi‑armed bandits, Ding et al. \cite{ding_entropy_mab24} establish global convergence guarantees for entropy‑regularized softmax policy gradient methods using (nearly) unbiased stochastic estimators, while Aghaei \cite{Aghaei2025SoftmaxPG} studies convergence in bandit and tabular MDP settings and characterizes when convergence holds under linear function approximation.
In contrast, our analysis focuses on the softmax parameterization with $L2$ regularization for which only a recent work  \cite{anita2024convergence} is available when regularization parameter is large enough to ensure convex functional. This precludes the interesting case of vanishing regularization parameter, the only one that can convergence to the optimal solution.

Also recently, Bhandari and Russo \cite{bhandari_global_2024} examined the softmax parameterization under an idealized policy gradient scheme assuming exact gradient evaluations. Here, we instead investigate the more practical case of stochastic (approximate) gradients, under slightly stronger assumptions.

Additional related studies include \cite{wang2019neural}, which explores deep neural network parameterizations, and \cite{zhang_global_2020}, which proposes a Monte Carlo variant with random roll-out horizons for infinite-horizon discounted problems.

In a series of recent works, Mei et al. \cite{mei_global_2020,mei2023stochastic} further advanced this line of research by showing that exact-gradient policy gradient methods with softmax parameterization converge at a rate of 
$O(1/t)$, and that entropy regularization can accelerate convergence. They also show that, at odds with common lore, the stochastic gradient can converge for MAB even for a constant learning rate albeit at the price of having an important number of iterations (see also the comments in \cite{baudry2025does}). 
However, their analysis relies critically on the assumption that the reward distribution is bounded—an assumption we do not make in our work. Furthermore, their framework does not account for the presence of L2 regularization, which is a key component in our setting.

From a broader theoretical perspective, our analysis focuses on softmax-pa\-ra\-me\-te\-rized policy gradients with 
$L2$ regularization. We build on  arguments similar to those used in establishing the convergence of general stochastic gradient descent (SGD), starting with the classical work of Robbins and Monro \cite{robbins_stochastic_1951}. A comprehensive reference on this topic is provided in \cite{chen_stochastic_2002}, see also  \cite{sgd_conv_non_cx20,mertikopoulos_almost_2020}
for recent contributions including non-convex objectives.

Classical results on stochastic approximation also provide a rigorous foundation for the analysis of stochastic gradient descent (SGD) algorithms with time-varying perturbations. Early work by Blum~\cite{blum1954approximation}, and later extensions by Borkar et. al.~\cite{borkar2008stochastic}(Chap. 2), \cite{borkar_ode_2025} and Kushner and Yin~\cite{kushner2003stochastic}(Chap. 5), establish almost sure convergence of iterative methods subject to diminishing step sizes and vanishing regularization terms. 
Further results are available when the gradient is perturbed by a bounded term, see \cite{ajalloeian2021convergencesgdbiasedgradients}; in our case 
the perturbation is the gradient of the L2 regularization term which is not a priori bounded. 
Moreover the regularization component follows a step-size schedule that does not necessarily fit within this framework, see \Cref{rem:gamma_rho_hypothesis} for details.

More precise analyses are available when the criterion is convex, see 
\cite{regularizedSGD_convex} or when the regularization is 
entropic \cite{weissmann2025almost}. 
These studies show that when the regularization coefficient decreases sufficiently fast, the asymptotic behavior of the algorithm coincides with that of the unregularized case. From a practical point of view, introducing a time-dependent quadratic penalty can stabilize the early stages of optimization while maintaining asymptotic unbiasedness. More recent analyses, such as the work of Bach~\cite{bach2015adaptivity}, further emphasize the connection between decaying regularization, stability, and the adaptive properties of averaged SGD in locally strongly convex settings. But applying these general results to Multi Armed Bandit requires some assumptions which are not always satisfied in our setting.

Indeed, convergence results for SGD, such as those in \cite{chen_stochastic_2002} (Thms. 1.2.1 and 1.3.1), rely on several hypotheses, e.g. the uniqueness of the critical point (which does not hold here), boundedness conditions (the optimal 
preference vector is unbounded when regularization is not present) or the existence of a suitable Lyapunov function (the standard one degenerates in this setting) and boundedness of trajectories \cite{mertikopoulos_almost_2020}.
Despite these challenges, our approach still builds upon this theoretical foundation, combining estimates and results from prior literature that, to our knowledge, have not yet been applied in this specific context. 

We are therefore able to obtain first, state of the art, results for the convergence of the L2 regularized softmax policy gradient with vanishing regularization parameter.

\section{The $L2$ regularized softmax parameterized policy gradient Multi Armed Bandit} \label{sec:notations}

The Multi Armed Bandit problem in the formulation of \cite{sutton_reinforcement_2018}, involves a choice among $k$ 
alternatives called 'arms` and indexed by 
\begin{align}
a \in [k]:=\{ 1, 2, \ldots, k\}.    
\end{align}
When chosen, each arm \(a\) gives rewards sampled from some reward distribution $R(a)$; we introduce the notation for the average of the distribution corresponding to arm $a$:
\begin{equation}
	q_*(a) := \E [ R(a) ], \forall a \in [k].
	\label{eq:hyp_mean0}
\end{equation}
For instance, reward can be sampled from a normal variable with mean  
$q_*(a)$ and variance $\sigma(a)^2=1$, but we allow for more general choices (see \Cref{sec:numerical_student}).
The choice selected at time step \(t\) is denoted $A_t$ and the reward 
$R_t \sim R(A_t)$ 
is sampled from the distribution $R(A_t)$ associated with arm $A_t$.
The objective is to maximize the average reward.

We will denote 
\begin{align}
    \Pcal_k =\{\Pi \in (\R_+)^k : \sum_{a \in [k]} \Pi_a=1\},
\end{align}
the set of all probability distributions with outcomes in $[k]$.
\\ 
The softmax policy gradient algorithm works with a 
'preference vector' $H \in \R^k$ that, 
through the softmax mapping~:
\begin{equation}\Pi _H(a)={\frac{e^{H(a)}}{\sum_{b \in [k]}e^{H(b)}}}, 
	\label{eq:definition_piha}
\end{equation}
creates a distribution $\Pi_{H} \in \Pcal_k$ from which the choice is made. 
Here, when the preference vector is $H$, the probability to choose arm $A$ is 
$\Pi_H(A)$.
To this classical framework we add $L2$ regularization 
so that finally the goal is to find the $H\in \mathbb{R}^k$ solution to~:
\begin{align}
& \text{maximize}_{H \in \R^k} \Lcal_\gamma(H),
\label{eq:problem_mab_as_maximization}
\\ &
\text{with} \footnotemark
\ \ 
\Lcal_\gamma(H) :=\mathbb{E}_{A \sim \Pi_H} \left[ R(A) - \frac{\gamma}{2}\| H\|^2\right].
\label{eq:functional_to_min}
\end{align}
\footnotetext{The notation $V \sim \mu$ means that the random variable $V$ is sampled from the distribution $\mu$.}

\noindent
The parameter $\gamma \ge 0$ is the $L2$ regularization coefficient; in particular 
\begin{equation}
\Lcal_0(H) :=\mathbb{E}_{A \sim \Pi_H} \left[ R(A) \right].
\label{eq:Lzero}
\end{equation}

When the dependence of $\gamma$ is not important we will only write 
$\Lcal(H)$ instead of $\Lcal_\gamma(H)$.
The difference with 
 the classical MAB  \cite{sutton_reinforcement_2018}(Sect 2.8) lies in the presence of the regularization  $\frac{\gamma}{2} \|H\|^2$. 
Assuming possible dependence of $\gamma$ on the index $t$, 
problem \eqref{eq:problem_mab_as_maximization} 
is solved using a stochastic gradient ascent algorithm~:
\begin{align}
&H_{t+1}(a) =H_t(a) 
\nonumber \\&
+ \rho_t  \left[ 
(R_t- \bar{R}_t) (\one_{a=A_t}- \Pi_{H_t}(a))- \gamma_t H_t(a)
\right], a \in [k].
\label{eq:definition_Ht_pg}
\end{align}
Here $\bar{R}_t$ is the mean reward up to time $t$ constructed from 
$R_\tau$, $\tau < t$; the use of  $\bar{R}_t$ is not compulsory but has been showed to improve numerical behavior. Note that $\bar{R}_t$ does not bias the right hand side as the overall average is null.
The 'learning rate` (also called 'time step') $\rho_t$ 
and the regularization coefficient $\gamma_t$ are both positive.

\section{Theoretical convergence results when $\gamma_t  \searrow 0$} \label{sec:cv_proof}

We are interested in what happens when $\gamma$ is varying in time; the most natural behavior is to take $\gamma_t$ decreasing in order to gradually diminish the impact of 
regularization and to find a solution of the original problem of maximizing $\Lcal_0$. 

We first recall 
known results
that allow
to see update formula \eqref{eq:definition_Ht_pg} as a stochastic gradient ascent algorithm \`a la Robins and Monro \cite{robbins_stochastic_1951}. 
The reward $R_t$ represents the consequence of choosing arm $A_t$ and  this choice is independent of $\Fcal_t$ but depends on $H_t$ through $\Pi_{H_t}$. Chronologically, the choice $A_t$ is made between time $t$ and $t+1$. Denote  $\Fcal_t$ the filtration constructed with information available up to time step $t$. It is known that (see \cite{sutton_reinforcement_2018})~:
\begin{align}
&	\E [ R_t \one_{a=A_t}| \Fcal_t ] = q_*(a)\Pi_{H_t}(a), \forall a \in [k],
	\label{eq:hyp_mean} \\
&	\E[\one_{a=A_t}|\Fcal_t] = \Pi_{H_t}(a),
	\label{eq:filtration_ft_at}   \\
&	\E [ R_t | \Fcal_t ] = \sum_a \Pi_{H_t}(a) q_*(a) = \Lcal_0(H_t).\label{eq:cond_mean_total_reward} 
\end{align}

As is standard 
we assume from now on that~:\footnote{This will be relaxed in \Cref{sec:numerical_student}.}
\begin{align}
&	\text{ there exists a constant } C_m>0 \text{ such that~: }
\nonumber\\ 
&	\E [ R(a)^2 ] \le C_m, \forall a \in [k].
	\label{eq:hyp_second_moment}
\end{align}
\noindent 
We also assume that $\gamma_t$ is a deterministic function of the time $t$ (all that follows also works for non-deterministic choices independent of $\Fcal_t$ under mild additional assumptions). 
We use the notations (see \Cref{eq:definition_Ht_pg})~:
\begin{align}
&	u_t(a):=(R_t- \bar{R}_t) (\one_{a=A_t}- \Pi_{H_t}(a)),
	\label{eq:def_non_biased_gradient0}
    \\ 
&	g_t(a):=	u_t(a)- \gamma_t H_t(a).
	\label{eq:def_gt}
\end{align}
With these notations, we recall that~\cite{sutton_reinforcement_2018}:
\begin{equation}
	\E    \left[ \left. u_t
	\right|\Fcal_t \right] =   \left. \nabla_H \Lcal_0(H)\right|_{H=H_t},
	\label{eq:non_biased_gradient_ut}
\end{equation}
and thus:
\begin{equation}
\E    \left[ \left. g_t
\right|\Fcal_t \right] =   \left. \nabla_H \Lcal_{\gamma_t}(H)\right|_{H=H_t},
\label{eq:non_biased_gradient_gt}
\end{equation}
which means that the update \eqref{eq:definition_Ht_pg} is indeed an unbiased estimation of the true gradient $\nabla_H \Lcal_{\gamma_t}(H)$.
Note moreover that:\footnote{Here $\odot$ denotes the Hadamard, i.e., componentwise, product of vectors.}
\begin{align}
& 
\Lcal_0(H) = \langle q_*, \Pi_H \rangle,  
\\ &
\nabla_H \Lcal_0(H) = \left( \sum_{b\in [k]} q_*(b) \Pi_H(b) (  \one_{a=b}- \Pi_H(a)) \right)_{a=1}^k 
\nonumber\\ &
= q_* \odot \Pi_H - \Pi_H \cdot \E_{\Pi_H}[q_*]
= \Pi_H \odot \left( q_*- \langle \Pi_H, q_*\rangle 
\right).
    \label{eq:nablaL0}
\end{align}
In addition it was also proved
(see \cite{mei_global_2020,mei2023stochastic})
that for some constant 
$c_{q_*}$ depending only on $q_*$ and $C_m$ and $C_{q_*}= 2c_{q_*}^2$~:
\begin{equation}
	\E [ \|u_t \|^2 ] \le c_{q_*}^2,
	\label{eq:bounded_gradu}
\end{equation}
which shows that:
\begin{equation}
\E [ \|g_t \|^2 |\Fcal_t] \le C_{q_*} + 2 \gamma_t^2 \|H_t\|^2.
\label{eq:bounded_grad}
\end{equation}
\label{lemma:unbiased_grad}

Some convergence results have been proved in the literature but none allows to have insight in the 
 behavior of $H_t$ when the regularization parameter $\gamma_t$ vanishes. This is the main object of this contribution.  
We will assume in the following that~:
\begin{align}
\textbf{Hyp}_\gamma: \ \ \ 
	\gamma_t \text{ is a decreasing sequence, } \ \lim_{t\to\infty} \gamma_t =  0.
	\label{eq:hyp_gamma_decreasing}
\end{align}

\subsection{Non constant $\rho_t$}
When $\rho_t$ depends on $t$  we make the standard assumption:
\begin{align}
& 
\textbf{Hyp}_\rho: \ \ \ 
\sum_{t\ge 0} \rho_t = \infty \text{ and } \sum_{t\ge 0} \rho_t^2 < \infty .
	\label{eq:hyp_rhot}
\end{align}
We start with a technical remark:
\begin{lemma}
Under assumptions~\eqref{eq:hyp_second_moment}, \eqref{eq:hyp_rhot}, \eqref{eq:hyp_gamma_decreasing} denote $t_0$ the first index such that for all $t\ge t_0$: $\rho_t \gamma_t \le 1$\footnote{Such a $t_0$ exists because $\rho_t \gamma_t$ is deterministic and $\rho_t \gamma_t \to 0$.}. Then:
\begin{align}
& 
\forall t \ge t_0: 
	\E[ \gamma_t^2 \|H_t\|^2] \le \max\{ \gamma_{t_0}^2 \E[\|H_{t_0} \|^2], c_{q_*}^2 \}.
\label{eq:boundHzero}
\end{align}
Moreover there exists a constant $C_g >0$ such that~:
\begin{equation}
\E [ \|g_t \|^2 ] \le C_g, \  \forall t \geq 0.
\label{eq:bounded_grad_cg}
\end{equation}
\label{lemma:bounded_gH_gt}
\end{lemma}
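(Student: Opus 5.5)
We will prove \eqref{eq:boundHzero} by induction on $t\ge t_0$, working with the update written as $H_{t+1} = (1-\rho_t\gamma_t) H_t + \rho_t u_t$. Since $\gamma_{t+1}\le\gamma_t$ by $\textbf{Hyp}_\gamma$, it suffices to bound $\gamma_t^2\E\|H_{t+1}\|^2$. Write $M:=\max\{\gamma_{t_0}^2\E\|H_{t_0}\|^2, c_{q_*}^2\}$. The base case $t=t_0$ holds by definition of $M$.

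For the induction step, suppose $\E[\gamma_t^2\|H_t\|^2]\le M$. Because $\rho_t\gamma_t\in[0,1]$ for $t\ge t_0$, the vector $\gamma_t H_{t+1} = (1-\rho_t\gamma_t)(\gamma_t H_t) + (\rho_t\gamma_t) u_t$ is a convex combination of $\gamma_t H_t$ and $u_t$. Convexity of $x\mapsto \|x\|^2$ then gives
\begin{align*}
\gamma_t^2\|H_{t+1}\|^2 \le (1-\rho_t\gamma_t)\gamma_t^2\|H_t\|^2 + \rho_t\gamma_t \|u_t\|^2 .
\end{align*}
Taking expectations and using \eqref{eq:bounded_gradu} together with the induction hypothesis yields $\gamma_t^2\E\|H_{t+1}\|^2 \le (1-\rho_t\gamma_t)M + \rho_t\gamma_t c_{q_*}^2 \le M$. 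Finally, $\gamma_{t+1}^2\le\gamma_t^2$ closes the induction. This convexity trick avoids having to control the cross term $\E\langle H_t,u_t\rangle$, which would otherwise be the main difficulty, since $\E[u_t|\Fcal_t]=\nabla\Lcal_0(H_t)$ is not orthogonal to $H_t$ in general. The only care needed is that \eqref{eq:bounded_gradu} is an unconditional bound, and this is sufficient here because we only need the unconditional expectation.

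For \eqref{eq:bounded_grad_cg}, we take expectations in \eqref{eq:bounded_grad} to get $\E\|g_t\|^2\le C_{q_*}+2\E[\gamma_t^2\|H_t\|^2]$, which for $t\ge t_0$ is at most $C_{q_*}+2M$. For the finitely many indices $t<t_0$, we note that $\E\|H_t\|^2<\infty$ by a finite induction: $\|H_{t+1}\|\le |1-\rho_t\gamma_t|\,\|H_t\|+\rho_t\|u_t\|$, with $\E\|u_t\|^2$ bounded and $H_0$ assumed square integrable (e.g. deterministic). The same finite induction also shows that $M$ is finite. Hence each $\E\|g_t\|^2$ for $t<t_0$ is finite, and we set $C_g$ to be the maximum of $C_{q_*}+2M$ and these finitely many values.
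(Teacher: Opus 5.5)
Your proof is correct and follows the paper's argument. Like the paper, you write $\gamma_t H_{t+1}=(1-\rho_t\gamma_t)\gamma_t H_t+\rho_t\gamma_t u_t$ as a convex combination, propagate the bound using convexity of $\|\cdot\|^2$, \eqref{eq:bounded_gradu} and $\gamma_{t+1}\le\gamma_t$, and then read off \eqref{eq:bounded_grad_cg} from \eqref{eq:bounded_grad}; you additionally handle the finitely many indices $t<t_0$ (and the finiteness of $\E\|H_{t_0}\|^2$) explicitly, which the paper leaves implicit.
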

\vspace*{-0.5in}
\begin{proof}
See Appendix.
\end{proof}
\begin{remark}
It can be noted that Lemma~\ref{lemma:bounded_gH_gt} remains true when instead of \eqref{eq:hyp_rhot} we only assume $\rho_t \to 0$.
For completeness, the analogous result when $\lim_{t\to\infty} \gamma_t >0$  is given in Appendix, Lemma~\ref{lemma:boundedness_gammabarnonzero}.
\end{remark}

We state now a first result that gives information on the behavior of the sequence $(H_t)_{t\ge0}$, namely the summability of the gradient.
\begin{proposition}
	Under assumptions~
	\eqref{eq:hyp_second_moment}, 	\eqref{eq:hyp_rhot}, \eqref{eq:hyp_gamma_decreasing} 
and  denoting $\Lambda_T =\sum_{t=0}^{T}\rho_t$~:
 \begin{align}
& 
\sum_{t=0}^{\infty }\rho _t\mathbb{E}[\|\nabla _H{\cal L}_{\gamma _t}(H_t)\|^2]<\infty ,
\label{eq:summability}
\\ &
\sum_{t=0}^{\infty } 
\frac{\gamma_t-\gamma _{t+1}}{2}\mathbb{E}[ \| H_{t+1}\|^2]<\infty ,
\label{eq:summability_dgammaH2}
\\ &
\sum_{t=0}^{\infty } 
\frac{\gamma_t-\gamma _{t+1}}{2}\mathbb{E}[ \| H_{t}\|^2]<\infty ,
\label{eq:summability_dgammaH2t}
\\ &
\lim_{T\to \infty} \frac{1}{\Lambda_T} \sum_{t=0}^{T}\rho _t\mathbb{E}[\|\nabla _H{\cal L}_{\gamma _t}(H_t)\|^2]=0 ,
\label{eq:meancv}
\\ & 
\liminf_{t\to \infty}  \mathbb{E}[\|\nabla _H{\cal L}_{\gamma _t}(H_t)\|^2] =0.
\label{eq:liminf}
 \end{align}
\label{prop:summability}
\end{proposition}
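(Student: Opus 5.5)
The plan is the standard descent-lemma argument for stochastic gradient ascent, applied to the time-varying functional $\Lcal_{\gamma_t}$. The extra terms created by the change of $\gamma_t$ will be controlled by monotonicity of $\gamma_t$.

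\textbf{Smoothness step.} First note that $\Lcal_0(H)=\langle q_*,\Pi_H\rangle$ is bounded, $|\Lcal_0|\le \|q_*\|_\infty$, and has Lipschitz gradient. Indeed, \eqref{eq:nablaL0} shows that $\nabla\Lcal_0$ is a smooth function of $\Pi_H$ whose derivatives are uniformly bounded in $H$, with a constant $L_0$ depending only on $q_*$. Hence $\Lcal_{\gamma}$ has $(L_0+\gamma)$-Lipschitz gradient, and $L_0+\gamma_t\le L_0+\gamma_0=:L$ uniformly in $t$.

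\textbf{One-step inequality.} The descent lemma applied to $\Lcal_{\gamma_t}$ gives
\begin{align*}
\Lcal_{\gamma_t}(H_{t+1})\ge \Lcal_{\gamma_t}(H_t)+\rho_t\langle \nabla\Lcal_{\gamma_t}(H_t),g_t\rangle-\frac{L}{2}\rho_t^2\|g_t\|^2 .
\end{align*}
We then pass from $\gamma_t$ to $\gamma_{t+1}$ on the left via the identity
\begin{align*}
\Lcal_{\gamma_{t+1}}(H_{t+1})=\Lcal_{\gamma_t}(H_{t+1})+\frac{\gamma_t-\gamma_{t+1}}{2}\|H_{t+1}\|^2 .
\end{align*}
Next take expectations. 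By \eqref{eq:non_biased_gradient_gt} the cross term becomes $\rho_t\E\|\nabla\Lcal_{\gamma_t}(H_t)\|^2$, and by Lemma~\ref{lemma:bounded_gH_gt} the quadratic term is bounded by $\frac L2\rho_t^2C_g$. Summing from $0$ to $T$ and telescoping yields
\begin{align*}
\sum_{t=0}^T\rho_t\E\|\nabla\Lcal_{\gamma_t}(H_t)\|^2+\sum_{t=0}^T\frac{\gamma_t-\gamma_{t+1}}{2}\E\|H_{t+1}\|^2\le \E\Lcal_{\gamma_{T+1}}(H_{T+1})-\E\Lcal_{\gamma_0}(H_0)+\frac{LC_g}{2}\sum_t\rho_t^2 .
\end{align*}
The right-hand side is uniformly bounded in $T$. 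Indeed, $\Lcal_{\gamma}\le \Lcal_0\le\|q_*\|_\infty$, and $-\E\Lcal_{\gamma_0}(H_0)$ is finite assuming $\E\|H_0\|^2<\infty$ (for instance $H_0$ deterministic). The step sizes satisfy $\sum\rho_t^2<\infty$ by \eqref{eq:hyp_rhot}. Since $\gamma_t$ is decreasing, both sums on the left are nonnegative. This gives \eqref{eq:summability} and \eqref{eq:summability_dgammaH2} simultaneously.

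\textbf{Shifting the index for \eqref{eq:summability_dgammaH2t}.} This part is where I expect the real work. I would write $H_{t+1}=H_t+\rho_tg_t$, so that $\|H_t\|^2\le 2\|H_{t+1}\|^2+2\rho_t^2\|g_t\|^2$. Then
\begin{align*}
\sum_t(\gamma_t-\gamma_{t+1})\E\|H_t\|^2\le 2\sum_t(\gamma_t-\gamma_{t+1})\E\|H_{t+1}\|^2+2C_g\gamma_0\sum_t\rho_t^2<\infty .
\end{align*}
Here we used $\gamma_t-\gamma_{t+1}\le\gamma_0$ together with \eqref{eq:bounded_grad_cg}.

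\textbf{Consequences.} Statement \eqref{eq:meancv} follows because the numerator stays bounded while $\Lambda_T\to\infty$ by \eqref{eq:hyp_rhot}. Statement \eqref{eq:liminf} follows by contradiction. If $\E\|\nabla\Lcal_{\gamma_t}(H_t)\|^2\ge\varepsilon>0$ for all large $t$, then $\sum\rho_t=\infty$ contradicts \eqref{eq:summability}.

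The main subtlety is the smoothness constant of $\Lcal_0$ under the softmax parameterization and making it uniform in $H$. This should follow from the bounded derivatives of $\Pi_H$, which are polynomials in $\Pi_H$.
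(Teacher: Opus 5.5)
Your proposal is correct and follows essentially the same route as the paper. The paper also uses a second-order lower bound for $\Lcal_{\gamma_t}$ along the step and the identity $\Lcal_{\gamma_t}(H_{t+1})=\Lcal_{\gamma_{t+1}}(H_{t+1})-\frac{\gamma_t-\gamma_{t+1}}{2}\|H_{t+1}\|^2$, then telescopes using the upper bound on $\Lcal_\gamma$ and $\sum_t\rho_t^2<\infty$, handles the index shift via $\E\|H_{t+1}-H_t\|^2\le\rho_t^2C_g$, and finishes with the usual $\Lambda_T\to\infty$ and contradiction arguments. Your choice of the Lipschitz constant $L_0+\gamma_0$ for $\nabla\Lcal_{\gamma_t}$ is, if anything, slightly more careful than the paper's Hessian bound $c_\star$ for $\Lcal_0$ alone, since it explicitly accounts for the $-\gamma_t I$ part of the Hessian.
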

\begin{proof}
See Appendix.
\end{proof}
\noindent 
The convergence of the "liminf" of the gradient norm to zero in (\ref{eq:liminf})
 ensures that the algorithm repeatedly approaches stationary points of the objective function, justifying gradient-based stopping criteria, and showing that meaningful progress is achieved despite the inherent stochastic noise of the updates.

\noindent To obtain  results stronger than those in  Proposition \ref{prop:summability}, we will need two  technical lemmas
\ref{lemma:seriesXnYn}
and 
\ref{lemma:summability_seq_cv}
that are stated and proved in the Appendix. 

We will introduce for any $\Pi \in \Pcal_k$ the regret:
\begin{align}
\Rcal(\Pi) := \max_a q_*(a) - \langle q_*,\Pi\rangle.
\label{eq:def_regret}
\end{align}
Regret is the key metric optimized during training and serves as an indicator of result quality. Lower regret implies better performance, with the ideal scenario being no regret at all.
Denote $\delta_a$ the Dirac mass supported in the arm $a$ i.e. $\delta_a=(0,...,1,...,0)$ with $1$ in the position $a \in [k]$
and:
\begin{align}
& 
    \Dcal = \{ \delta_a, a \in [k] \}, \\
& \forall \pi
\in  \R^k : \ \|\pi- \Dcal \| := \min_{\mu \in \Dcal} \| \pi - \mu\|.
\label{eq:def_dist_Dcal}
\end{align}
We  consider several hypotheses, it will be made clear below when each one is invoked:
\begin{align}
&    {\textbf{Hyp-diff}}: \ \
\forall a \neq b \in [k] : \     q_*(a) \neq q_*(b).
\label{eq:hyp_qa_diff_qb} \\
&    {\textbf{Hyp-3}: } \ \ 
\forall a \in [k] : \E[|R(a)|^3] <\infty .
\label{eq:hyp_third_moment} \\
&   {\textbf{Hyp-}\rho\gamma: } \ \ 
 \exists \ c_{\rho\gamma} > 0 \text{ such that: } \nonumber \\ 
 & \forall t\ge 0: \  c_{\rho\gamma} \rho_t \gamma_t^2 \le \gamma_t - \gamma_{t+1}.  
\label{eq:gamma_hyp2}
\end{align}
%

\begin{remark}
Note that under \eqref{eq:hyp_qa_diff_qb} 
 there exists some unique best arm $a^*$ such that: 
\begin{align}
\forall a \in [k]: \ 
q_*({a^*}) \ge q_*(a).
\label{eq:definition_bestarm_astar}
\end{align}    
\end{remark}

As a notational convenience, from now on, for a sequence of random variables $X_t$: 
 \begin{align}
 & X_t 
\xrightarrow[t\to\infty]{L^2} Y 
& \textrm{ means }
\lim_{t\to \infty}  \mathbb{E}[\|X_t-Y\|^2] =0.
\label{eq:def_limL2}
 \end{align}

We can now give the main result for time-varying $\rho_t$ which shows that, under appropriate hypothesis, we can guarantee the convergence of the gradient, regret and the limit distribution. 
\begin{theorem}
Under assumptions~
\eqref{eq:hyp_second_moment}, 	
\eqref{eq:hyp_rhot},
\eqref{eq:hyp_gamma_decreasing}: 
\begin{enumerate}
\item  If in addition \eqref{eq:hyp_third_moment}
holds then:
 \begin{align}
 & \nabla _H{\cal L}_{\gamma _t}(H_t) 
\xrightarrow[t\to\infty]{L^2} 0 .
\label{eq:limLgammat}
 \end{align}
\item
If instead of \eqref{eq:hyp_third_moment}
we have \eqref{eq:gamma_hyp2}
then \eqref{eq:limLgammat} still holds and moreover~:
\begin{align}
&    \sum_{t=0}^{\infty} \rho_t \mathbb{E}[\|\gamma_t H_t\|^2] < \infty,
    \\ &  
    \sum_{t=0}^{\infty} \rho_t  \mathbb{E}[\|\nabla _H{\cal L}_{0}(H_t)\|^2]< \infty,   
\label{eq:summability_L0gtHt}
\\
&
\text{the sequences }\left( \E[\gamma_t \| H_t\|^2] \right)_{t\ge 0}, 
\left( \E[\Lcal_{0}(H_t)] \right)_{t\ge 0} 
\nonumber \\&
\textrm{ and } 
\left( \E[\Lcal_{\gamma_t}(H_t)] \right)_{t\ge 0} 
\textrm{ converge},
\label{eq:gtHt2_cv}
\\
 &
 \gamma_t H_t \xrightarrow[t\to\infty]{L^2} 0,
\label{eq:lim_gammatHt}
 \\
&
\nabla _H{\cal L}_{0}(H_t) 
\xrightarrow[t\to\infty]{L^2} 0. 
\label{eq:limL0}
 \end{align}
\item 
Assuming~\eqref{eq:hyp_qa_diff_qb}  and
 \eqref{eq:gamma_hyp2} 
hold true then:
\begin{align}
\Pi_{H_t}\xrightarrow[t\to\infty]{L^2} \Dcal. 
\label{eq:lim_in_Dcal}
\end{align}
\label{item:lim_dist_zero}
\item If in addition to 
\eqref{eq:hyp_qa_diff_qb} and
\eqref{eq:gamma_hyp2}
we have\footnote{See \Cref{eq:definition_bestarm_astar} for definition of $a^*$.}:
\begin{align}
 & \inf_{t\geq 1} \Pi_{H_t}(a^*) \ge c_0 > 0, a.s. &
\label{eq:hyp_liminf_piastar_positive}
\\
& \mathrm{then: } &\nonumber \\
& \Rcal(\Pi_{H_t})\xrightarrow[t\to\infty]{L^2}0 ,&
\label{eq:lim_regret_zero}
\\ 
&\Pi_{H_t}\xrightarrow[t\to\infty]{L^2} \delta_{a^*}. &
\label{eq:lim_qstar} 
\end{align}
\end{enumerate}
\label{thm:cv_nabla}
\end{theorem}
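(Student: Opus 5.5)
The plan is to upgrade the summability in Proposition~\ref{prop:summability} to genuine $L^2$ convergence. The tool is the classical lemma (Lemma~\ref{lemma:seriesXnYn} in the Appendix): if $x_t\ge 0$, $\sum_t \rho_t x_t<\infty$, $\sum_t\rho_t=\infty$ and $x_{t+1}-x_t\le C\rho_t$ (or $|x_{t+1}-x_t|\le C\rho_t$), then $x_t\to 0$.

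\textbf{Item 1.} Set $x_t=\E[\|\nabla_H\Lcal_{\gamma_t}(H_t)\|^2]$; by \eqref{eq:summability} we already have $\sum_t\rho_t x_t<\infty$. It remains to bound the increments $x_{t+1}-x_t$. Write $\nabla\Lcal_{\gamma}(H)=\nabla\Lcal_0(H)-\gamma H$. The map $\nabla\Lcal_0$ is bounded and globally Lipschitz, since $\Pi_H$ is Lipschitz with bounded derivatives. Hence
\[
\|\nabla\Lcal_0(H_{t+1})\|^2-\|\nabla\Lcal_0(H_t)\|^2=O(\|H_{t+1}-H_t\|)=O(\rho_t\|g_t\|),
\]
and its expectation is $O(\rho_t)$ by \eqref{eq:bounded_grad_cg}. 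For the cross term and the term $\|\gamma_tH_t\|^2$, I expand $\gamma_{t+1}H_{t+1}=\gamma_{t+1}(1-\rho_t\gamma_t)H_t+\gamma_{t+1}\rho_t u_t$. Using $\gamma_{t+1}\le\gamma_t$, one gets $\|\gamma_{t+1}H_{t+1}\|^2-\|\gamma_tH_t\|^2\le 2\rho_t\gamma_t\langle H_t,\gamma_t u_t\rangle+\rho_t^2\gamma_t^2\|u_t\|^2$, and the mixed products are handled the same way. The expectations of these terms involve $\E[\gamma_t\|H_t\|\,\|u_t\|]$ and products such as $\|\gamma_tH_t\|\,\|u_t\|^2$. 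Controlling the latter requires third moments of the reward, which is where \eqref{eq:hyp_third_moment} enters: by H\"older, with $\E[\gamma_t^2\|H_t\|^2]$ bounded by \eqref{eq:boundHzero} and $\E[\|u_t\|^3|\Fcal_t]$ bounded, every increment is $\le C\rho_t$. The lemma then gives \eqref{eq:limLgammat}.

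\textbf{Item 2.} Here I replace third-moment control by \eqref{eq:gamma_hyp2}. Since $\rho_t\gamma_t^2\le(\gamma_t-\gamma_{t+1})/c_{\rho\gamma}$, \eqref{eq:summability_dgammaH2t} gives $\sum_t\rho_t\E[\|\gamma_tH_t\|^2]<\infty$. Combining this with \eqref{eq:summability} and $\|\nabla\Lcal_0\|^2\le 2\|\nabla\Lcal_{\gamma_t}\|^2+2\|\gamma_tH_t\|^2$ yields \eqref{eq:summability_L0gtHt}. For convergence of the sequences in \eqref{eq:gtHt2_cv}, I will use the standard descent expansion from the proof of Proposition~\ref{prop:summability}:
\[
\E\Lcal_{\gamma_{t+1}}(H_{t+1})\ge\E\Lcal_{\gamma_t}(H_t)+\rho_t\E\|\nabla\Lcal_{\gamma_t}\|^2+\tfrac{\gamma_t-\gamma_{t+1}}2\E\|H_{t+1}\|^2-C\rho_t^2 .
\]
This sequence is bounded above by $\max q_*$, and its increments equal a summable nonnegative part plus a summable error, so it converges. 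Applying the same argument to $\E\|H_t\|^2$ times $\gamma_t$, through the recursion for $\|H_{t+1}\|^2$, shows that $\E[\gamma_t\|H_t\|^2]$ converges, and $\E\Lcal_0$ is the sum of the two. Next I apply the lemma to $x_t=\E\|\gamma_tH_t\|^2$. Its increments are bounded by $C\rho_t$ using \eqref{eq:boundHzero}, \eqref{eq:bounded_gradu} and Cauchy--Schwarz; only second moments are needed because the cross term $\langle\gamma_tH_t,u_t\rangle$ is linear in $u_t$. This gives \eqref{eq:lim_gammatHt}. For $x_t=\E\|\nabla\Lcal_0(H_t)\|^2$, the increments are $O(\rho_t)$ by the Lipschitz argument, so \eqref{eq:limL0} follows, and \eqref{eq:limLgammat} is then immediate.

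\textbf{Items 3 and 4.} By \eqref{eq:nablaL0}, $\|\nabla\Lcal_0(H)\|^2=\sum_a\Pi_H(a)^2(q_*(a)-\langle\Pi_H,q_*\rangle)^2$. Under \eqref{eq:hyp_qa_diff_qb}, a compactness argument on $\Pcal_k$ shows that $\|\nabla\Lcal_0\|^2\ge\phi(\|\Pi_H-\Dcal\|)$ for some continuous $\phi$ vanishing only at $0$; the zeros of this function on the simplex are exactly $\Dcal$. I will try to make this quantitative, $\ge c\|\Pi-\Dcal\|^2$, by noting that near a vertex $\delta_a$ the mass on the other arms dominates, using the gap $\min_{b\ne a}|q_*(a)-q_*(b)|$. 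Then \eqref{eq:limL0} gives \eqref{eq:lim_in_Dcal}. For item 4, \eqref{eq:hyp_liminf_piastar_positive} forces $\|\Pi_{H_t}-\delta_b\|\ge c_0$ for every $b\neq a^*$. Consequently $\E\|\Pi_{H_t}-\Dcal\|^2\ge\E\min(\|\Pi_{H_t}-\delta_{a^*}\|^2,c_0^2)$, which implies \eqref{eq:lim_qstar}. Since $\Rcal(\Pi)=\langle q_*,\delta_{a^*}-\Pi\rangle\le\|q_*\|\,\|\Pi-\delta_{a^*}\|$, \eqref{eq:lim_regret_zero} follows.

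\textbf{Main obstacle.} I expect the hardest step to be the increment bound in item 1. The non-bounded $\gamma_tH_t$ couples with the noise $u_t$, and I must verify that H\"older with third moments really closes the estimate using only the a priori bound \eqref{eq:boundHzero}.
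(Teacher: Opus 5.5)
\textbf{Gap in item~1.} Your lemma needs $x_{t+1}-x_t\le C\rho_t$ for $x_t=\E\|\nabla\Lcal_{\gamma_t}(H_t)\|^2$. Under the item-1 hypotheses this bound is neither proved nor true in general.

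The monotonicity trick $\gamma_{t+1}\le\gamma_t$ works for $\|\gamma_tH_t\|^2$ only because that term is nonnegative. The mixed term $-2\gamma_t\langle\nabla\Lcal_0(H_t),H_t\rangle$ has no sign, so it cannot be ``handled the same way.'' Replacing $\gamma_{t+1}$ by $\gamma_t$ leaves the residual $2(\gamma_t-\gamma_{t+1})\langle\nabla\Lcal_0(H_{t+1}),H_{t+1}\rangle$. With \eqref{eq:boundHzero}, the best bound on its expectation is $C(\gamma_t-\gamma_{t+1})/\gamma_{t+1}$. For a general decreasing $\gamma_t$ (say piecewise constant, halving at very sparse times) this is of order one at the jump times, while $\rho_t\to0$.

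This is not just a lossy estimate. Suppose $H_t$ is near a critical point of $\Lcal_{\gamma_t}$ when $\gamma$ halves and $\rho_t$ is tiny. Then $\nabla\Lcal_{\gamma_{t+1}}(H_{t+1})\approx\frac{\gamma_t}{2}H_t$, a jump in $x_t$ that does not shrink with $\rho_t$. Third moments play no role in this term, so the obstacle you flag is not the real one.

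\textbf{Repair and comparison with the paper.} Absorb the residual into a summable error. It is at most $(\gamma_t-\gamma_{t+1})\bigl(\sup_H\|\nabla\Lcal_0(H)\|^2+\E\|H_{t+1}\|^2\bigr)$, which is summable by \eqref{eq:summability_dgammaH2}. You then need the lemma in the form $x_{t+1}-x_t\le C\rho_t+\epsilon_t$ with $\sum_t\epsilon_t<\infty$. That version is true. Each late upcrossing of $x$ from $\eta$ to $2\eta$ forces $\sum_s\rho_s\ge\eta/(4C)$ over indices where $x_s>\eta$, and this cannot happen infinitely often if $\sum_t\rho_tx_t<\infty$. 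But you must prove it: the lemma you cite, Lemma~\ref{lemma:seriesXnYn}, is only a Cauchy--Schwarz bound for series.

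Once repaired, your route genuinely differs from the paper's. The paper bounds $\E[G_{t+1}]-\E[G_t]$ by a \emph{summable} sequence and concludes with Lemma~\ref{lemma:summability_seq_cv} and \eqref{eq:liminf}. Making $\langle\nabla\Lcal_0(H_{t+1})-\nabla\Lcal_0(H_t),\nabla\Lcal_{\gamma_t}(H_t)\rangle$ summable forces a second-order Taylor expansion. Its remainder $\rho_t^2\E[\|g_t\|^2\|\nabla\Lcal_{\gamma_t}(H_t)\|]$ is exactly where \eqref{eq:hyp_third_moment} enters. Because you only need $O(\rho_t)$, every other piece of your split is controlled by the Lipschitz bound on $\nabla\Lcal_0$ and Cauchy--Schwarz against \eqref{eq:boundHzero} and \eqref{eq:bounded_grad_cg}. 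So the repaired argument would not need third moments at all.

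\textbf{Items 2--4 are sound.}
\begin{itemize}
\item Item~2: the increments of $\E\|\gamma_tH_t\|^2$ and $\E\|\nabla\Lcal_0(H_t)\|^2$ really are $O(\rho_t)$, so your lemma applies there. The limit $\gamma_tH_t\to0$ also follows at once from boundedness of $\E[\gamma_t\|H_t\|^2]$ and $\gamma_t\to0$, as in the paper. Your direct proof that $\E[\Lcal_{\gamma_t}(H_t)]$ converges (bounded above, increments $\ge-C_1\rho_t^2$) is simpler than the paper's.
\item Item~3: fine. The quantitative bound you aim for is the paper's \eqref{eq:proof_piht_to_dcal_rhot}.
\item Item~4: you reverse the paper's order. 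You get $\Pi_{H_t}\to\delta_{a^*}$ from item~3 and $\|\Pi_{H_t}-\delta_b\|\ge c_0$ for $b\ne a^*$, then the regret by linearity. This avoids the external Lemma~3 of Mei et al.\ that the paper invokes.
\end{itemize}
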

\begin{proof}
See Appendix.
\end{proof}

\begin{remark}
A stronger convergence result than  \eqref{eq:lim_in_Dcal}, but under more restrictive hypotheses, is given in Appendix  Lemma~\ref{lemma:cv_pi}.
\end{remark}

\begin{remark}
The result shows that in the limit of large $t$, the distribution 
$\Pi_{H_t}$ will converge to a Dirac mass; under suitable hypothesis this is optimal and the regret is also converging to zero.

Such a result may seem at first non-standard as one may rather expect that we prove that $H_t$, the primal optimization variable, converges to some optimal value. This is not possible because the best reward value is realized when $\Pi_H$ is a sum of Dirac masses supported in the non-empty set 
\begin{align}
\Ocal = \{ a^* \in [k] : q_*(a^*) \ge q_*(\ell), \ \forall \ell \in [k] \}   
\end{align}
(argmax of $q_*$). Such a distribution is only reached when $H(a) = \infty$ for any $a\in \Ocal$ so there is no finite solution that $H_t$ could converge to.
\end{remark}
\begin{remark}
Proving the convergence towards a critical point in \eqref{eq:limL0}
does not require complicated hypotheses. For the convergence of the regret
\eqref{eq:lim_regret_zero}, the assumption \eqref{eq:hyp_qa_diff_qb} is standard in the literature ; the hypothesis  \eqref{eq:hyp_liminf_piastar_positive}
is a strong assumption, but 
(when $\rho_t$ is not constant) 
is still required in state of the art results (even without regularization) unless the support is considered bounded. Of course, a general proof without using \eqref{eq:hyp_liminf_piastar_positive} would be desirable. Note that this is exactly what the $\epsilon$-greedy policies, very popular in reinforcement learning, try to do because, unless the rewards are bounded, there is nothing that prevents some exceptional reward obtained from a non-optimal arm to perturb the optimization indefinitely.
\end{remark}

\begin{remark}
To the best of our knowledge, the assumption \eqref{eq:gamma_hyp2} is never used in the literature; 
this regime is different from the one in \cite{kushner2003stochastic} that are closer to the classical Robbins-Monro setting \cite{robbins_stochastic_1951}; the offending part is the $\gamma_t$ sequence whose sum is not necessarily diverging and $\sum_t \gamma_t^2$ is not necessarily converging; therefore the tools presented there do not apply. \\ 
The assumption \eqref{eq:gamma_hyp2}
is satisfied for the linear decay schedule $\gamma_t = \frac{1}{c_1 + c_2 \cdot t}$ when $\rho_t$ remains bounded (and in particular for any $\rho_t$ satisfying \eqref{eq:hyp_rhot}). When $\rho_t= \frac{1}{c_1' + c_2' \cdot t}$ many functional forms 
ensure \eqref{eq:gamma_hyp2}, 
such as 
$\gamma_t = \frac{1}{c_1'' + c_2'' \cdot t^\alpha}$, $\alpha >0$ and 
$\gamma_t = \frac{1}{c_1''' + c_2''' \cdot \log{t}}$ (note that in this case both
$\sum_t \rho_t \gamma_t$ and
$\sum_t \gamma_t^2$ are diverging). On the contrary, for $\gamma_t = \frac{1}{c_3 + c_3\cdot \log(\log{t})}$
linear schedule for $\rho_t$ is not enough but $\rho_t = 
\frac{1}{c_1' + c_2' \cdot t \log t}$ (which fulfills \eqref{eq:hyp_rhot}) will be.
In general, for a sequence $\rho_t$ there always exists a sequence $\gamma_t$ satisfying  assumption \eqref{eq:gamma_hyp2}, take for instance 
$\gamma_t = \frac{c}{c_0 +\sum_{\tau=1}^{t-1} \rho_\tau}$ where $c>0$ and $c_0\ge 0$ are arbitrary constants. Note that in this case 
$\sum_t \rho_t \gamma_t = \infty$ which is not a regime already seen in the literature.
%
%
\label{rem:gamma_rho_hypothesis}
\end{remark}

\subsection{Constant $\rho_t=\rho$}

We now switch to the situation when $\rho_t$ is a constant, denoted $\rho$; note that we still honor the assumption \eqref{eq:hyp_gamma_decreasing}. Analogous results to \Cref{thm:cv_nabla} can be formulated, we will only give the most complete conclusions and not all intermediary statements.

To ease notations, when a nonnegative sequence $U_t$ depending on $t$ and $\rho$
is such that for all $t\ge 0, \rho > 0$ we have  $U_t \le c \rho$ for some constant $c$ independent of $\rho$ (and $t$) we will denote $U_t = O(\rho)$; same for 
notation $O( \sqrt{\rho})$.
%
\begin{theorem} Let $\rho_t=\rho$ (constant) and denote 
\begin{align}
\bar{\Pi}_t= \frac{\sum_{s=1}^t \Pi_{H_s}}{t}.     
\end{align}
Under assumptions~
\eqref{eq:hyp_second_moment}, \eqref{eq:hyp_gamma_decreasing},   \eqref{eq:gamma_hyp2}
and \eqref{eq:hyp_qa_diff_qb}:
\begin{align}
 &
 \frac{\sum_{s=1}^t
 \E[\| \nabla _H{\cal L}_{\gamma_s}(H_s)\|^2] 
 }{t} =O(\rho),
\label{eq:limLgammat_rhocst} \\
& \Rcal(\Pibar_t) =O( \sqrt{\rho}),
\label{eq:lim_regret_zero_rhocst}
\\ 
& \| \Pibar_t -  \delta_{a^*} \| =O( \sqrt{\rho}).
\label{eq:lim_qstar_rhocst}
\end{align}
\label{thm:cv_nabla_rhocst}
\end{theorem}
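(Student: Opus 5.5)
The plan is an averaged ascent inequality for $\Lcal_{\gamma_t}$ along the iterates, with Hyp-$\rho\gamma$ absorbing the drift coming from the time variation of $\gamma_t$. I then convert the gradient bound into a bound on the regret of the Cesàro average $\Pibar_t$. I read $O(\cdot)$ in \eqref{eq:limLgammat_rhocst} up to a transient term of order $1/(\rho t)$, i.e. as a statement on the $\limsup$. The unconditional bound for every $t$ cannot hold, since for $t=1$ the left-hand side does not shrink with $\rho$.

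\textbf{Step 1 (ascent inequality).} $\nabla_H\Lcal_0$ is $L$-Lipschitz with $L$ depending only on $q_*$, since $\Pi_H$ is smooth with bounded derivatives. Hence $\Lcal_{\gamma}$ is $(L+\gamma)$-smooth. Using \eqref{eq:non_biased_gradient_gt} and \eqref{eq:bounded_grad}, one step of \eqref{eq:definition_Ht_pg} gives
\begin{align*}
\E[\Lcal_{\gamma_t}(H_{t+1})] \ge \E[\Lcal_{\gamma_t}(H_t)] + \rho\,\E\|\nabla_H\Lcal_{\gamma_t}(H_t)\|^2 - \tfrac{(L+\gamma_0)\rho^2}{2}\bigl(C_{q_*}+2\gamma_t^2\E\|H_t\|^2\bigr).
\end{align*}
Then I pass from $\gamma_t$ to $\gamma_{t+1}$ through the identity $\Lcal_{\gamma_{t+1}}(H)=\Lcal_{\gamma_t}(H)+\frac{\gamma_t-\gamma_{t+1}}{2}\|H\|^2$. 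This adds the nonnegative term $\frac{\gamma_t-\gamma_{t+1}}{2}\E\|H_{t+1}\|^2$ on the right.

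\textbf{Step 2 (absorbing the $\gamma_t^2\|H_t\|^2$ term).} The bad term is $\rho^2\gamma_t^2\E\|H_t\|^2$, and Lemma~\ref{lemma:bounded_gH_gt} is not available since $\rho_t\not\to0$. Instead, \eqref{eq:gamma_hyp2} gives $\rho\gamma_t^2\|H_t\|^2\le (\gamma_t-\gamma_{t+1})\|H_t\|^2/c_{\rho\gamma}$. I will control this by the telescoped gain terms $\frac{\gamma_{t-1}-\gamma_t}{2}\E\|H_t\|^2$, comparing $\|H_t\|^2$ with $\|H_{t+1}\|^2$ via the update, as in the proof of \eqref{eq:summability_dgammaH2t}. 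For $\rho$ small these terms are dominated, up to a $\rho$-independent constant that is absorbed into the $1/(\rho t)$ transient. Telescoping from $1$ to $t$ and using $\Lcal_{\gamma}\le\max_a q_*(a)$ then gives
\begin{align*}
\rho\sum_{s=1}^t\E\|\nabla_H\Lcal_{\gamma_s}(H_s)\|^2 + \rho\sum_{s=1}^t \E\|\gamma_sH_s\|^2 \le C_0 + C\rho^2 t .
\end{align*}
Dividing by $\rho t$ yields \eqref{eq:limLgammat_rhocst}. Since $\nabla\Lcal_0=\nabla\Lcal_{\gamma}+\gamma H$, the same $O(\rho)$ bound holds for the Cesàro mean of $\E\|\nabla_H\Lcal_0(H_s)\|^2$.

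\textbf{Step 3 (regret and distance to $\delta_{a^*}$).} By \eqref{eq:nablaL0}, $\|\nabla\Lcal_0(H)\|^2=\sum_a\Pi_H(a)^2(q_*(a)-\langle\Pi_H,q_*\rangle)^2$. By \eqref{eq:hyp_qa_diff_qb} with gap $\Delta=\min_{a\ne b}|q_*(a)-q_*(b)|$, I would show $\|\nabla\Lcal_0(H)\|\ge c(\Delta,k)\,\|\Pi_H-\Dcal\|$. Together with Jensen and Cauchy–Schwarz on the average, this gives $\frac1t\sum_s\E\|\Pi_{H_s}-\Dcal\|=O(\sqrt\rho)$.

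\textbf{Main obstacle.} Step 3 only yields closeness to \emph{some} Dirac mass, whereas \eqref{eq:lim_regret_zero_rhocst}--\eqref{eq:lim_qstar_rhocst} need $\delta_{a^*}$. To rule out mass on suboptimal Dirac masses, I would use that the $a^*$-component of $\nabla\Lcal_0$, namely $\Pi_H(a^*)(q_*(a^*)-\langle\Pi_H,q_*\rangle)$, is always nonnegative. Up to the $O(\rho)$-averaged $\gamma_tH_t$ perturbation, this makes $H_t(a^*)-H_t(b)$ a submartingale-like quantity. It should keep $\E[\Pi_{H_t}(a^*)]$ bounded away from zero, playing the role of \eqref{eq:hyp_liminf_piastar_positive}. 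Near $\delta_b$ with $b\ne a^*$ one then gets $\|\nabla\Lcal_0\|\gtrsim c_0\Delta$, which forces the average time spent there to be $O(\sqrt\rho)$. Linearity of $\Rcal$, with $\Rcal(\Pi)\le C\|\Pi-\delta_{a^*}\|$, then transfers the bound to $\Pibar_t$. I expect this lower bound on $\Pi_{H_t}(a^*)$ to be the hardest step. If the submartingale argument fails, I would fall back on the monotonicity of $\E[\Lcal_{\gamma_t}(H_t)]$ obtained in Step 1, up to $O(\rho)$.
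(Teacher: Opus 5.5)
Your Steps 1--2 match the paper's first part: the ascent inequality \eqref{eq:inequality_nabla_grad_proof1_rhocst}, telescoped and divided by $\rho t$. You are right that this gives $C_0/(\rho t)+C_1\rho$, so \eqref{eq:limLgammat_rhocst} holds only asymptotically; the paper's own summation produces the same transient without mentioning it. The absorption detour in Step 2 is unnecessary. The proof of Lemma~\ref{lemma:bounded_gH_gt} uses only convexity and $\rho\gamma_t\le 1$, which holds for constant $\rho$ once $\gamma_t\le 1/\rho$. Hence $\rho^2\gamma_t^2\E\|H_t\|^2=O(\rho^2)$ directly, as the paper itself notes.

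The genuine gap is in Step 3, exactly where you flag it, and the proposed fix does not work.

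(i) $H_t(a^*)-H_t(b)$ is not submartingale-like. Apart from the vanishing $\gamma_t$-term, its conditional drift is $\rho[\Pi(a^*)(q_*(a^*)-\bar q)-\Pi(b)(q_*(b)-\bar q)]$ with $\bar q=\langle\Pi,q_*\rangle$. This is negative, for example, for $q_*=(1,0.9,0)$ and $\Pi=(\epsilon,\frac12,\frac12-\epsilon)$. The sign of the $a^*$-component concerns $H_t(a^*)$ alone and says nothing about $\Pi_{H_t}(a^*)$.

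(ii) Even a bound $\E[\Pi_{H_t}(a^*)]\ge c$ would not help. For every $b$, $\|\nabla\Lcal_0(H)\|\to0$ as $\Pi_H\to\delta_b$. So a law of $\Pi_{H_t}$ split evenly between neighbourhoods of $\delta_b$ and $\delta_{a^*}$ has vanishing gradients and $\E[\Pi_{H_t}(a^*)]\approx\frac12$, yet its regret is about $\frac{q_*(a^*)-q_*(b)}{2}$. Your claim ``near $\delta_b$ one gets $\|\nabla\Lcal_0\|\gtrsim c_0\Delta$'' is self-contradictory, since near $\delta_b$ one has $\Pi(a^*)\approx 0$. The same example defeats your fallback: approximate monotonicity of $\E[\Lcal_{\gamma_t}(H_t)]$ does not identify its limit as $q_*(a^*)$.

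What is needed is a pathwise lower bound on $\Pi_{H_t}(a^*)$, and the paper does not derive one. Its proof invokes Lemma~3 of Mei et al.\ in the form $\|\nabla_H\Lcal_0(H_t)\|\ge c_0\Rcal(\Pi_{H_t})$, where $c_0$ is the constant of \eqref{eq:hyp_liminf_piastar_positive}. Jensen and linearity of $\Rcal$ then give $\E[\Rcal(\Pibar_t)^2]\le c_0^{-2}\frac1t\sum_s\E\|\nabla\Lcal_0(H_s)\|^2=O(\rho)$, and \eqref{eq:dem_rhot_regret_dist} yields \eqref{eq:lim_qstar_rhocst}. That hypothesis is not in the theorem's list, but the proof relies on it. With it, your Step 3 closes in three lines. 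Without it, the gap inequality \eqref{eq:proof_piht_to_dcal_rhot} only gives closeness to $\Dcal$, not to $\delta_{a^*}$, and your submartingale sketch does not supply the missing pathwise argument.
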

\vspace*{-0.35in}
\begin{proof}
See Appendix.
\end{proof}
The interpretation of the result is as follows: when the step $\rho$ is small enough, the procedure is converging with an error of order $\rho$.
Of course, ideally, one would like to obtain stronger results as in \Cref{thm:cv_nabla} while here we do not have information directly on $\Pi_{H_t}$ but rather on its average $\Pibar_t$. However, as we will see in the numerical results, $\Pi_{H_t}$ behaves well in practice.

\section{Numerical simulations} \label{sec:numerical}

\begin{figure}[ht]
  \begin{center}
    \centerline{
    \includegraphics[width=.9\columnwidth]{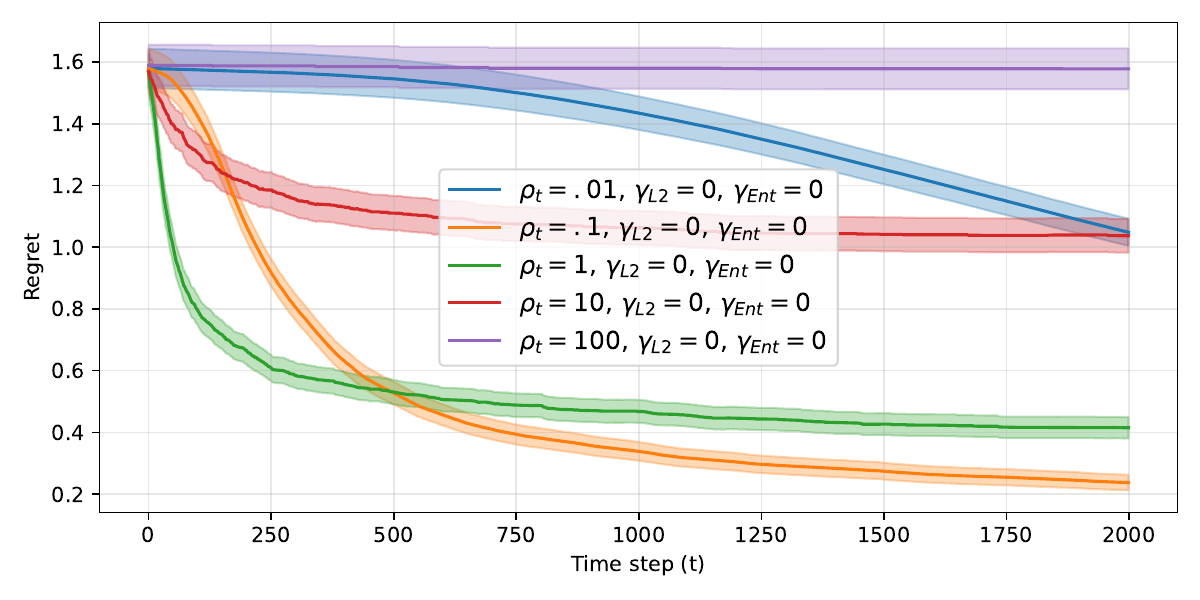}
}%
\caption{The average regret and 95\% confidence intervals when $\rho_t$ is constant, spanning several orders of magnitude from 0.01 to 100. Initial distribution 
$\Pi_{H_0}$ corresponds to 
 $H_0=(5,...,0)$. No regularization (neither L2 nor entropic) is used. The value $\rho_t=0.1$ appears to be the clear winner.
 This is used as baseline for latter comparisons (also coherent with results in  
 \Cref{fig:grid_search_linear_schedule_rhot} in appendix where an extensive grid search for time-dependent $\rho_t$ is presented).}   \label{fig:constant_rho_no_reg}
  \end{center}
\end{figure}

\begin{figure}[ht]
  \begin{center}
    \centerline{
\includegraphics[width=\columnwidth]{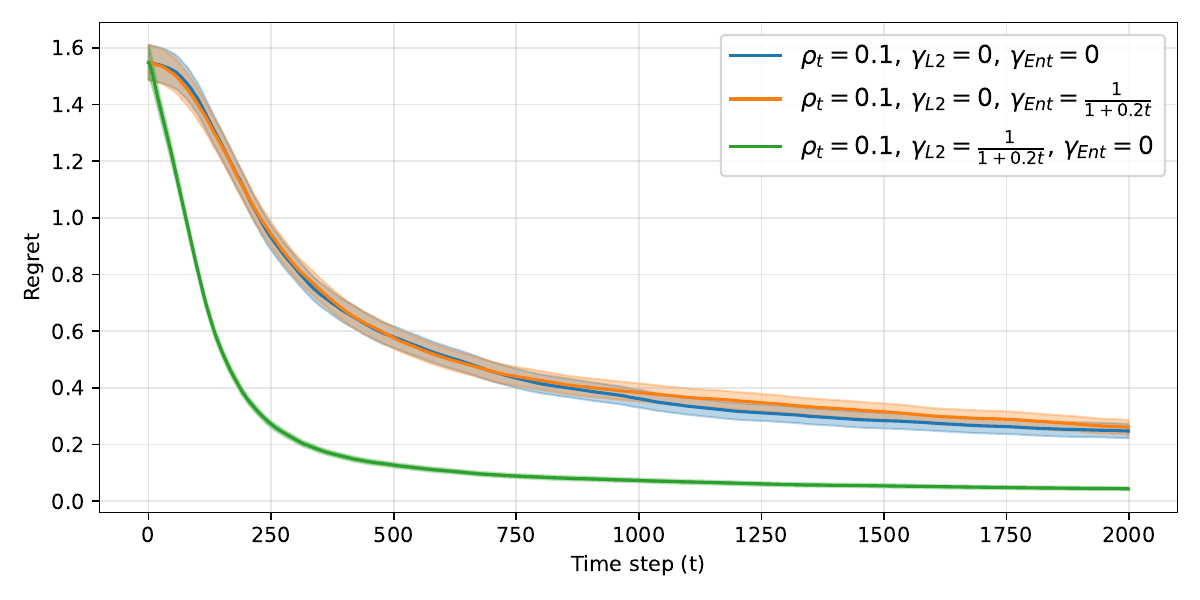}
}
\caption{The average regret and 95\% CI when starting from 
 $H_0=(5,...,0)$, $\rho_t=0.1$ (constant); the non regularized baseline ($\gamma_{L2}=0=\gamma_{Ent}$ is compared with the $L2$ (only) regularization
schedule
$\gamma_{L2}(t) = \frac{\gamma_0}{1 + 0.2 \cdot t}$
$\gamma_{Ent}(t) = 0$
and with the entropy only regularization schedule 
$\gamma_{L2}(t) = 0$
$\gamma_{Ent}(t) = \frac{\gamma_0}{1 + 0.2 \cdot t}$.
The L2 regularization seems to perform best while the entropy does not seem to help much.}  \label{fig:cst_rho_compare_L2_reg_entropy}
  \end{center}
\end{figure}

\begin{figure}[ht]
  \begin{center}
    \centerline{
\includegraphics[width=.95\columnwidth]{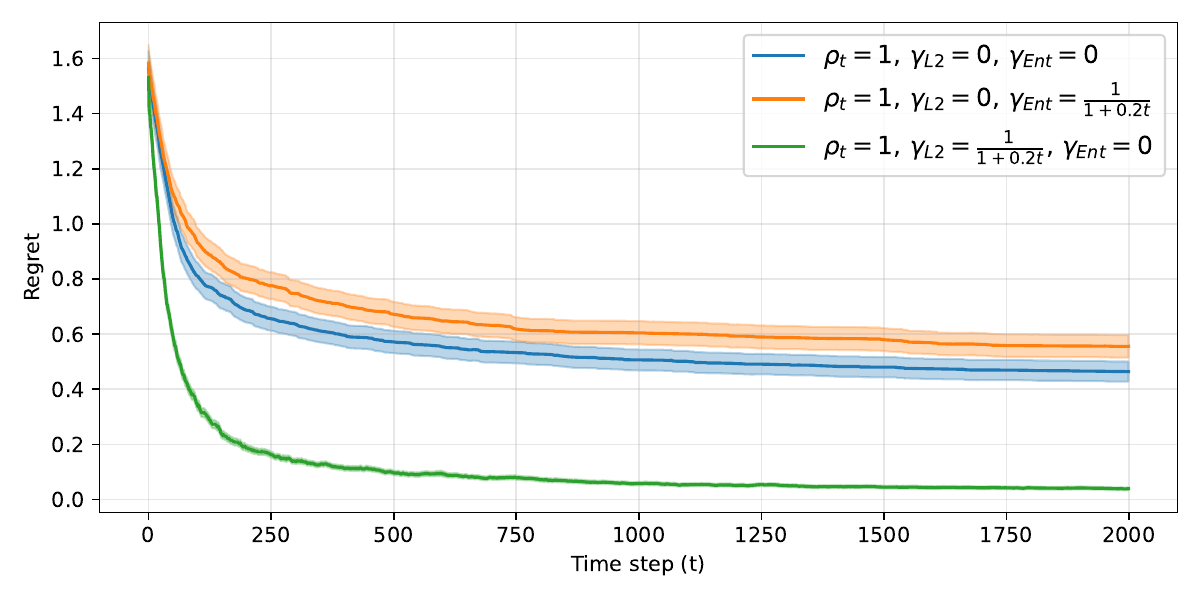}
}
\caption{
Same as in \Cref{fig:cst_rho_compare_L2_reg_entropy} except that for the main step $\rho_t$ we use a non-optimal value (constant); it is seem that the L2 regularization helps 'correct' the bad $\rho_t$ value.} 
\label{fig:cst_rho1_compare_L2_reg_entropy}
  \end{center}
\end{figure}

\begin{figure}[htbp!]
\begin{center}
\includegraphics[width=.9\columnwidth]{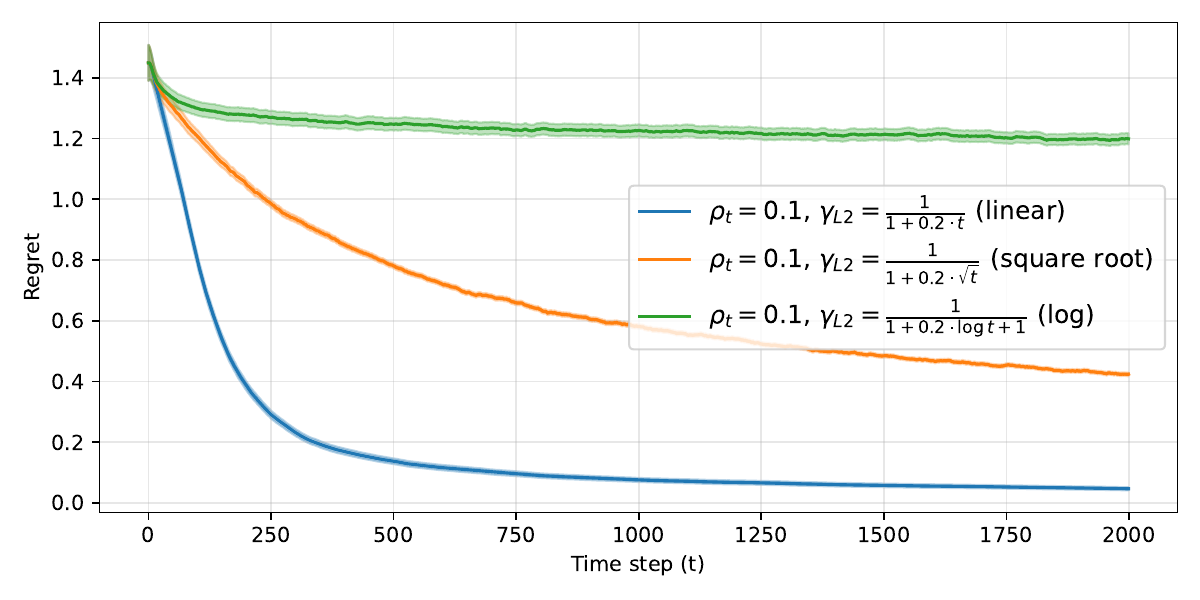}
\caption{The average regret and 95\% CI when starting from   $H_0=(5,...,0)$, $\rho_t=0.1$ (constant);
 several decay regimes are considered for $\gamma_t$: linear, square root and logarithmic.}  \label{fig:biased_several_decays}
\end{center}
\end{figure}

We tested the regularized MAB for several decay schedules $\gamma_t \to 0$\footnote{See also \Cref{sec:comparison_ucb} for the scope of our experiments and additional results on UCB algorithm.}. The  Python code is available
at 
\url{https://github.com/gabriel-turinici/regularized_policy_gradient} version December 20th 2025.
See \Cref{sec:formulas_entropy} \Cref{eq:Lcal_entropy} for
the precise meaning of $\gamma_{L2}$ and $\gamma_{Ent}$.

As in \cite{sutton_reinforcement_2018}(fig 2.5 page 38) we perform 
$M=1000$ runs, each having $T=2000$ time steps with $k=10$ arms; averages 
$q_*(a)$, $a \in [k]$ are independent and 
 sampled from a normal distribution
 located at $4$ and unit variance. 
For each of the arms $a \in [k]$ the conditional distribution of $R(a)$ with respect to $a$ is a Gaussian of mean $q_*(a)$ and variance~$1$.

Each of the $M$ runs has its own $q_*(\cdot)$ which do not change during the $T$ steps of the run. To obtain coherent comparisons, we use the same values of $q_*(\cdot)$ for all the bandits that are plotted in the same figure. 

We plot the regret, 
cf. the  definition in \eqref{eq:def_regret}, 
the lower the regret the better the algorithm performs.

In all cases we take the starting point to be biased, here $H_0=(5,0,...,0)$ as this was shown to correspond to more difficult situations~\cite{mei_global_2020}. This is also motivated by possible application to transient MAB where the averages $q_*$ may shift in time.

\subsection{Usefulness of the L2 regularization}

\subsubsection{Baseline}
\label{sec:baseline}

We first check that the regularization is useful by comparing the convergence with or without regularization; in order to 
make the addition of the regularization more clear we 
set as non-regularized baseline the best performing 
order of magnitude 
of vanilla policy gradient MAB
by looking on a grid spanning from $\rho=0.01$ to $\rho=100$;
the results are in Figure~\ref{fig:constant_rho_no_reg} and show that $\rho_t=0.1$ performs better than all the others.
This optimal value will be taken as baseline from now on.
We also give corresponding results for entropy regularization 
in \Cref{fig:rho01_cst_gamma001to100_entropy}
and see that the best constant is in the range $0.01-1$, with lowest values being better when $t$ is large; accordingly, we take for our entropy the schedule $\gamma_t = \frac{1}{1 + 0.2 t}$ to span all optimal ranges. Further baseline considerations are to be found in the Appendix, including a more extensive grid search for linear time schedules of the form $\rho_t = \frac{c_1}{1+c_2\cdot t}$, see \Cref{fig:grid_search_linear_schedule_rhot}.

\subsubsection{Results with respect to baseline}

We now compare the baseline with a regularized version in 
Figure~\ref{fig:cst_rho_compare_L2_reg_entropy}; the decay schedule chosen for the regularization is linear $\gamma_t = \frac{\gamma_0}{1 + c_1 t}$, $c_1 >0$. The results show that the presence of a regularization term helps converge faster and displays faster decay in the early phases.
Compared to entropy, the L2 regularization seems to be
the only way to consistently outperform the baseline. 

We also tested the robustness with respect to the optimality of $\rho_t$; we take a value that is the second best in \Cref{fig:constant_rho_no_reg} i.e., $\rho_t=1$; in this case the difference is even more striking, as seen in Figure~\ref{fig:cst_rho1_compare_L2_reg_entropy}, where the result indicates that the L2 regularization is able to 'repair' the non-optimal $\rho_t$ value and make it converge as the optimal one.
This shows that the L2 regularization is robust to other parameters and works towards improving the outcome.

\subsubsection{Several decay schedules}

We next compare several decay schedules in Figure~\ref{fig:biased_several_decays}: linear, logarithmic and square root. Based on these results the linear schedule appears to be the best, but other test cases may yield different results  depending on the parameter set.

\subsection{Robustness with respect to number of arms}

\begin{figure}[ht]
  \begin{center}
    \centerline{
\includegraphics[width=\columnwidth]{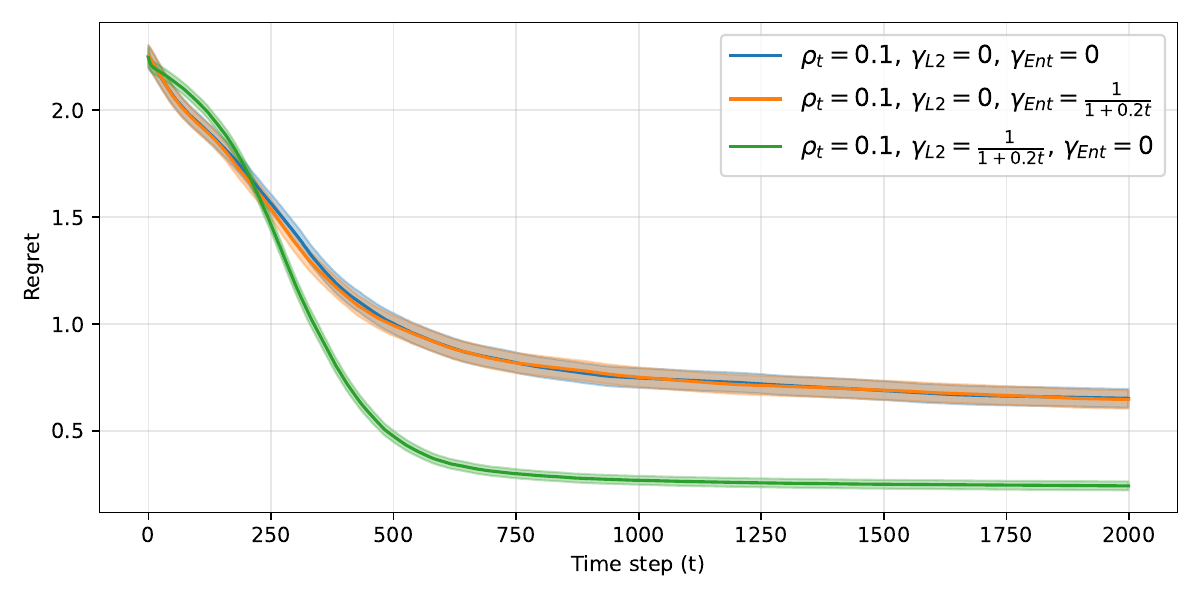}
}
\caption{The 
analogue of \Cref{fig:cst_rho_compare_L2_reg_entropy} except that now we have $k=50$ arms.
The L2 regularization still has the lowest regret among all scenarios considered: no regularization, L2 regularization, entropy regularization.}  \label{fig:cst_rho_compare_L2_reg_entropyk50}
  \end{center}
\end{figure}

\begin{figure}[ht]
  \begin{center}
    \centerline{
\includegraphics[width=\columnwidth]{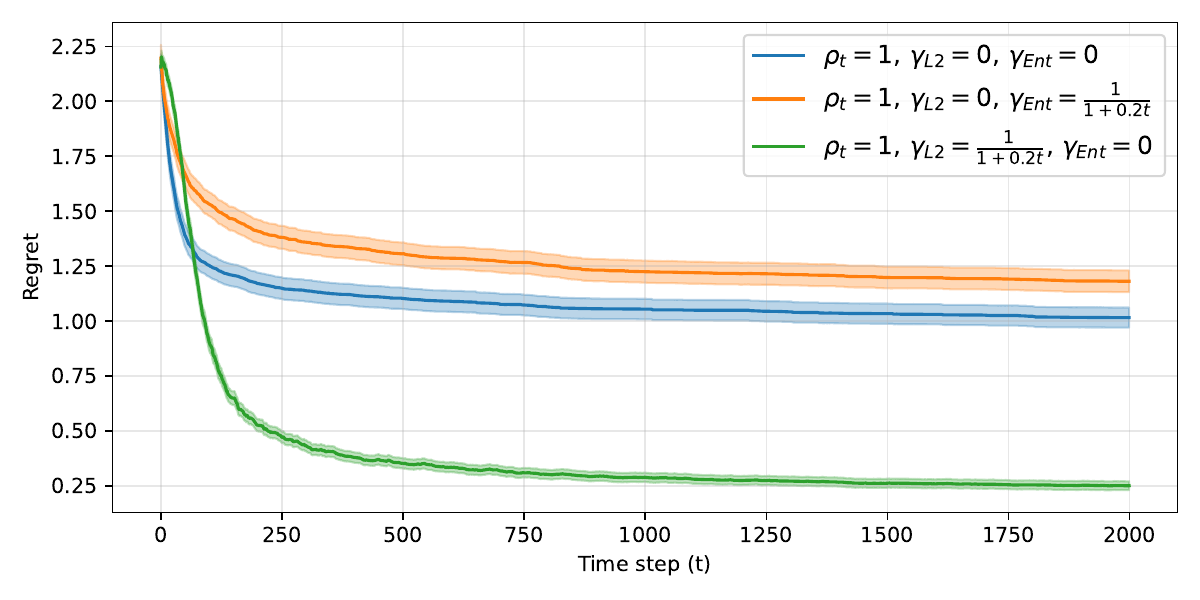}
}
\caption{The  analogue of \Cref{fig:cst_rho1_compare_L2_reg_entropy} for  $k=50$ arms.} 
\label{fig:cst_rho1_compare_L2_reg_entropyk50}
  \end{center}
\end{figure}

\begin{figure}[ht]
  \begin{center}
    \centerline{
\includegraphics[width=\columnwidth]{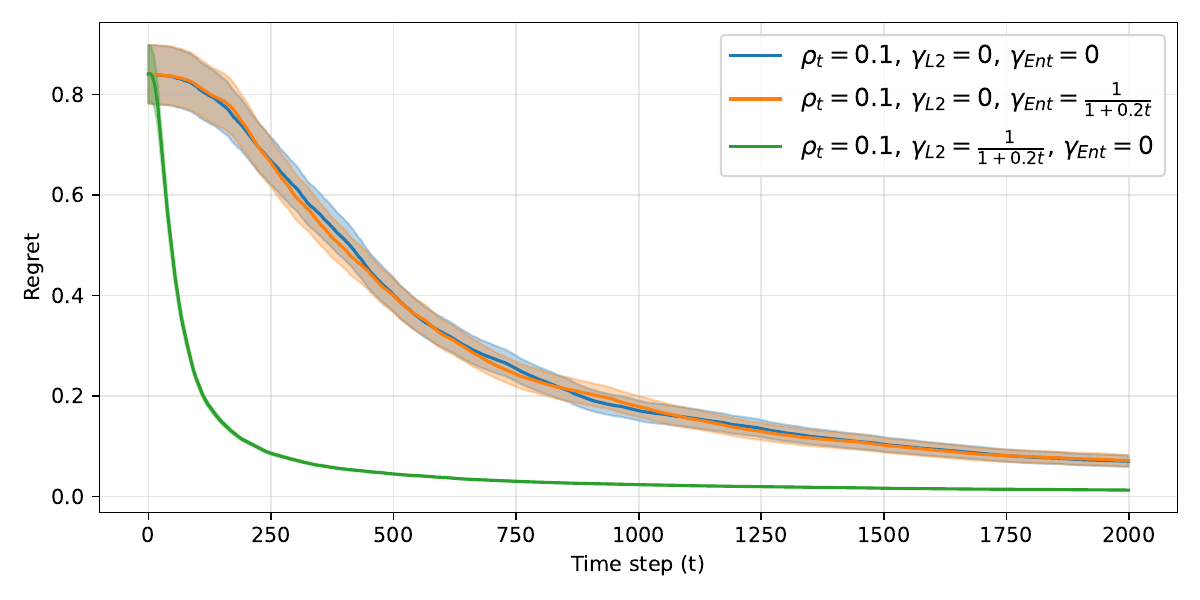}
}
\caption{The 
analogue of \Cref{fig:cst_rho_compare_L2_reg_entropy,fig:cst_rho_compare_L2_reg_entropyk50} for $k=3$ arms.
The convergence is faster but L2 regularization  manages to obtain lowest regret after fewer steps than the other two methods.}  \label{fig:cst_rho_compare_L2_reg_entropyk3}
  \end{center}
\end{figure}

To ensure the results are not specific to a given setting, we varied the number of arms. The results when we increased $k$ to $50$ are presented in \Cref{fig:cst_rho_compare_L2_reg_entropyk50}, 
which is the analogous to 
\Cref{fig:cst_rho_compare_L2_reg_entropy}
and 
\Cref{fig:cst_rho1_compare_L2_reg_entropyk50}
which is analogous to 
\Cref{fig:cst_rho1_compare_L2_reg_entropy}.
We observe in \Cref{fig:cst_rho_compare_L2_reg_entropyk50} that L2 regularization is still performing best among all scenarios.

As a comparison we also tested the results for $k=3$ arms. It is no surprise that in this case the convergence is faster and the no-regularization and entropic regularization manage to do a good job at it, as we can see in 
\Cref{fig:cst_rho_compare_L2_reg_entropyk3} but $L2$ regularization is still outperforming both.

\subsection{Robustness with respect to the reward distribution}
\label{sec:numerical_student}

\begin{figure}[ht]
  \begin{center}
    \centerline{
\includegraphics[width=\columnwidth]{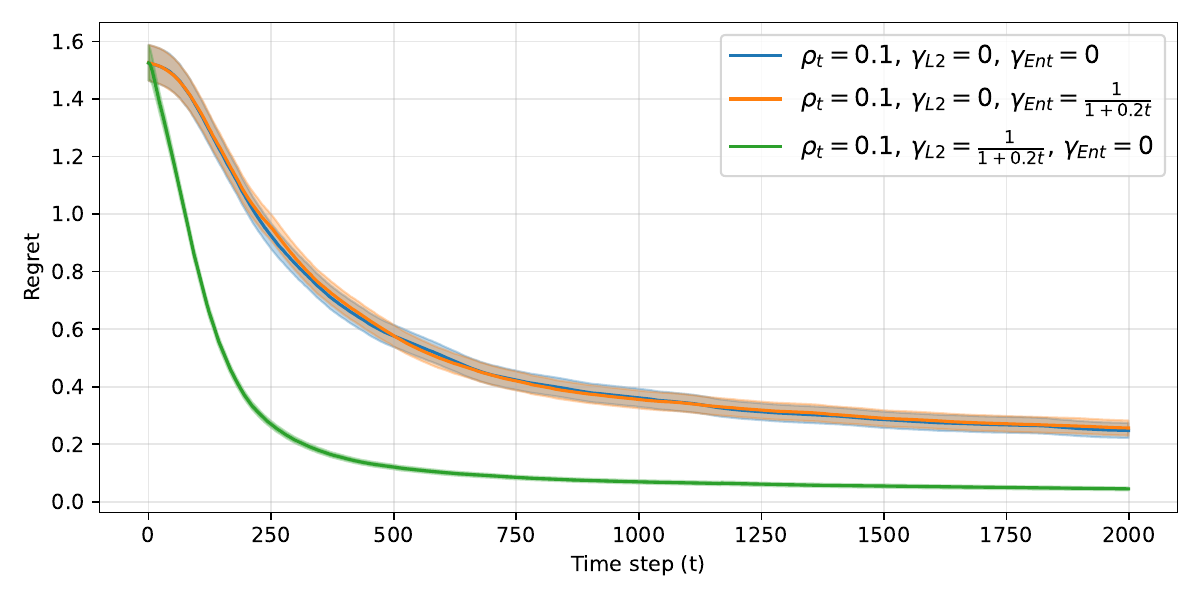}
}
\caption{The 
analogue of \Cref{fig:cst_rho_compare_L2_reg_entropy} except that now the stochasticity is given by a Student-t distribution.
with parameter $\nu=2.5$. The L2 regularization still has the lowest regret among all scenarios considered.}  \label{fig:cst_rho_compare_L2_reg_entropyk10t}
  \end{center}
\end{figure}

\begin{figure}[ht]
  \begin{center}
    \centerline{
\includegraphics[width=\columnwidth]{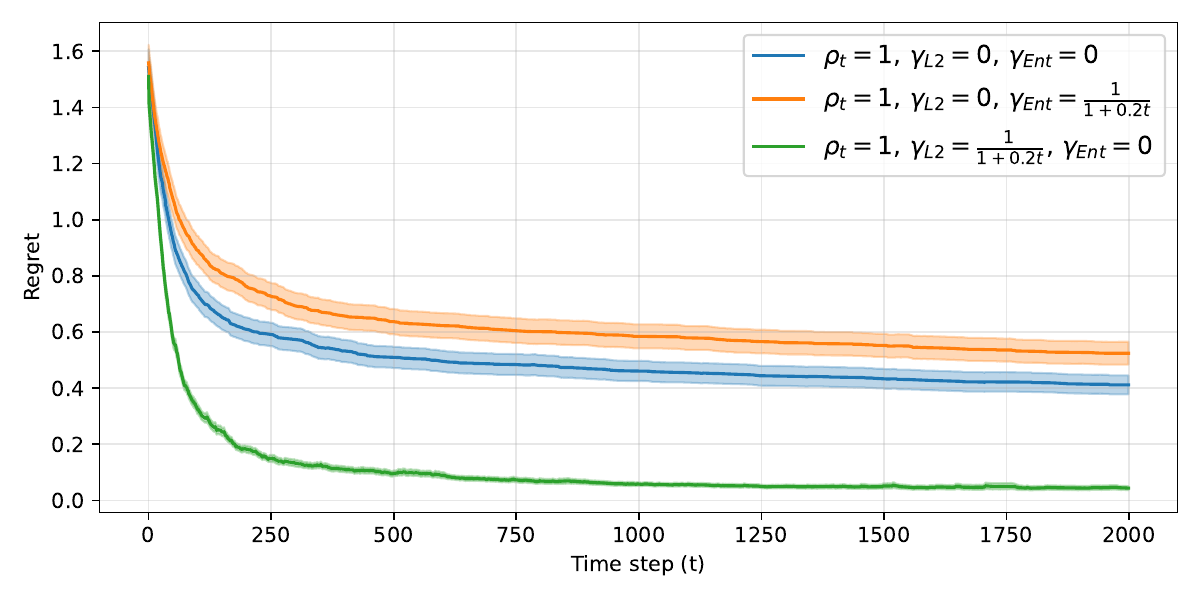}
}
\caption{The  analogue of \Cref{fig:cst_rho1_compare_L2_reg_entropy} when the rewards are drawn from a Student-t distribution with parameter $\nu=2.5$.} 
\label{fig:cst_rho1_compare_L2_reg_entropyk10t}
  \end{center}
\end{figure}

\begin{figure}[ht]
  \begin{center}
    \centerline{
\includegraphics[width=\columnwidth]{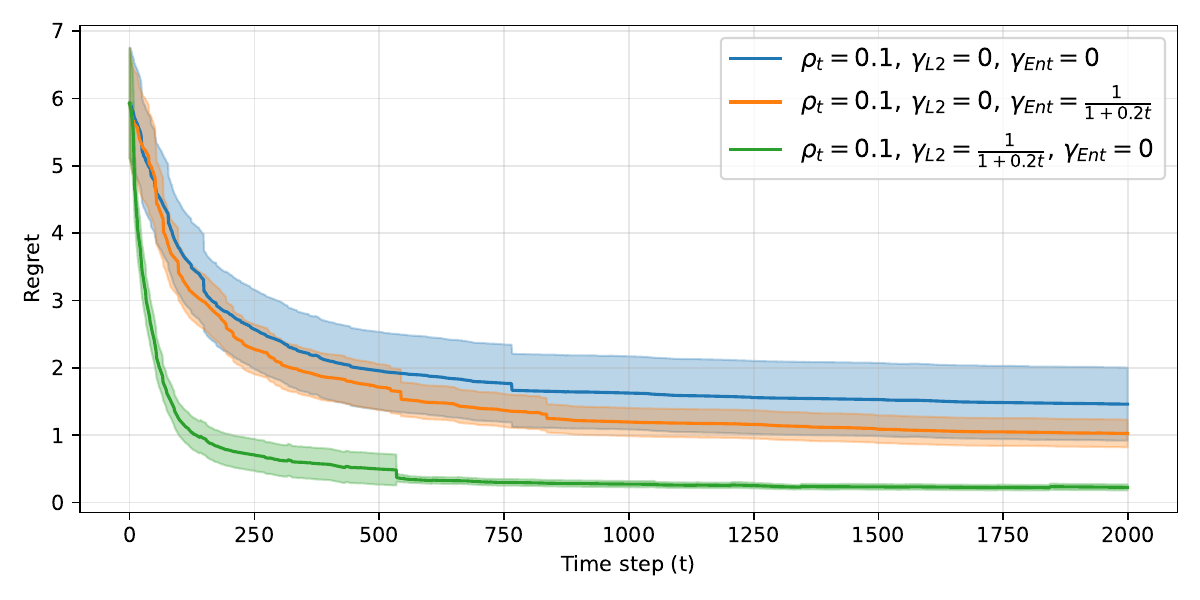}
}
\caption{The 
analogue of \Cref{fig:cst_rho_compare_L2_reg_entropy} except that now the stochasticity is given by a heavy tailed Student-t distribution 
with parameter $\nu=1.5$. The L2 regularization still has the lowest regret among all scenarios considered.}  \label{fig:cst_rho_compare_L2_reg_entropyk10t15}
  \end{center}
\end{figure}

One of the important characteristics of the theoretical results is that it does not requires the support of the random distributions to be bounded. Accordingly, we test here robustness with respect to the heavy-tailed nature of the distribution and consider the situation where the rewards are drawn from a Student-t distribution with finite second order moments but infinite third order moments i.e.
 has  $\nu=2.5$ degrees of freedom. In order to be consistent with previous tests we also rescaled by constant  $\sqrt{(\nu - 2.0) / \nu}$ so that is has unit variance. 
 The result in \Cref{fig:cst_rho_compare_L2_reg_entropyk10t}
shows that the L2 regularization works well even for such situations. 

A equally spectacular difference appears in 
\Cref{fig:cst_rho1_compare_L2_reg_entropyk10t}; as in 
\Cref{fig:cst_rho1_compare_L2_reg_entropy}
we took here some non-optimal $\rho_t=\rho$ and let L2 regularization 'correct' it. We see that the plain softmax gradient and the entropy-regularized gradients are simply not converging, stagnating at a regret of around $0.5$ which, at this scale, is rather important. On the contrary the L2 regularization helps restore the convergence.

Finally, we pushed the experiments even further and tested a Student-t distribution with parameter $\nu=1.5$\footnote{Note that for this value of $\nu$ the distribution does not even have finite variance.}; the result in \Cref{fig:cst_rho_compare_L2_reg_entropyk10t15} confirms the good performance of the $L2$ regularization  for such  heavy tailed distributions.

\section{Summary and discussion} \label{sec:conclusion}

We analyze in this work 
the $L2$ regularized softmax policy gradient for MAB. 
The regularization term is parameterized by a multiplicative coefficient  $\gamma_t$ and  we consider for the first time the vanishing regime $\gamma_t \searrow 0$; we present both theoretical (convergence) and numerical (performance) results.

We prove that when $\gamma_t \searrow 0$ the gradient of the average reward vanishes, that is, critical point equations are satisfied in the limit.
Unlike prior work on non-regularized MAB, our analysis does not rely on any bounded-support assumption on the arm reward distributions
$R(a)$, $a \in [k]$.
This is the first result known for this setting. During the proof we highlight an interesting new regime, cf. \eqref{eq:gamma_hyp2}, that links the main learning rate $\rho_t$ with the regularization decay $\gamma_t$; this regime goes beyond classical Robbins-Monro setting that would correspond to $\sum_t \rho_t \gamma_t < \infty$ and may be interesting to explore further. We also prove convergence under hypotheses coherent with the literature.

The theoretical results were tested numerically; it is seen that the L2 regularization improves the convergence when compared with non-regularized or entropy-regularized alternatives. The improvement is robust across a large range of conditions including the number of arms and the distribution of the rewards. In particular we also tested successfully heavy tailed distributions (Student-t with $\nu=2.5$ or $\nu=1.5$) which are beyond the scope of the previous known results even for the non-regularized setting.

\clearpage 
\newpage

\bibliographystyle{icml2026}
\bibliography{refs}

\newpage
\appendix
\onecolumn
\section{Appendix: proofs and further numerical results}

\subsection{Proof of Lemma~\ref{lemma:bounded_gH_gt}}
\begin{proof}
We take  $t\ge t_0$ and first prove the inequality \eqref{eq:boundHzero}. Recall the update formula
\eqref{eq:definition_Ht_pg} for $H_{t+1}$ : $H_{t+1} = (1-\rho_t \gamma_t) H_t + \rho_t u_t$. We multiply by $\gamma_t$ to obtain
\begin{align}
     \gamma_t H_{t+1} = (1-\rho_t \gamma_t) \gamma_t H_t + \gamma_t \rho_t u_t,
\end{align}
and thus~:
\begin{align}
& \E[\| \gamma_{t+1} H_{t+1}\|^2 ] \le  
\E[\| \gamma_{t} H_{t+1}\|^2]  
\nonumber\\ & 
= \E[ \| (1-\rho_t \gamma_t) \gamma_t H_t + \gamma_t \rho_t u_t \|^2 ]
\nonumber\\ & 
\le (1-\rho_t \gamma_t) \E [\| \gamma_t H_t \|^2] + \gamma_t \rho_t \E [\| u_t \|^2],
\end{align}
where the last inequality is due to convexity; together with \eqref{eq:bounded_gradu} this leads to
\begin{align}
\E[\| \gamma_{t+1} H_{t+1}\|^2 ] \le \max \{ \E[\| \gamma_{t} H_{t}\|^2 ], c_{q_*}^2 \},    
\end{align} which ends the proof of \eqref{eq:boundHzero}.
The inequality \eqref{eq:bounded_grad_cg} is then a consequence of
\eqref{eq:bounded_grad} and   
\eqref{eq:boundHzero}.
\end{proof}

\subsection{Boundedness result when  $\lim_{t\to\infty} \gamma_t > 0$}

\begin{lemma}
	Under assumptions~	\eqref{eq:hyp_second_moment}, \eqref{eq:hyp_rhot} 
and 
\begin{align}
\textbf{Hyp}^{>0}_\gamma: \ \ \ 
	\gamma_t \text{ is a decreasing sequence, } \ \lim_{t\to\infty} \gamma_t = \bar{\gamma}> 0,
	\label{eq:hyp_gammabar}
\end{align}
 denote $t_0$ the first index such that for all $t\ge t_0$: $\rho_t \gamma_t \le 1$. Then:
\begin{align}
&\forall t \ge t_0: 
\E[\|H_t \|^2] \le \max\{ \E[\|H_{t_0} \|^2], c_{q_*}^2/ \bar{\gamma}^2 \}.
\label{eq:boundHnonzero}
\end{align}
Moreover there exists a constant $C_g' >0$ such that~:
\begin{equation}
\E [ \|g_t \|^2 ] \le C_g'.
\label{eq:bounded_grad_cg2}
\end{equation}
\label{lemma:boundedness_gammabarnonzero}
\end{lemma}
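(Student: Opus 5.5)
The plan is to mirror the proof of Lemma~\ref{lemma:bounded_gH_gt}, working with $H_t$ directly instead of $\gamma_t H_t$. For $t\ge t_0$ we have $0\le\rho_t\gamma_t\le 1$. The update \eqref{eq:definition_Ht_pg} can be written as
\begin{align}
H_{t+1} = (1-\rho_t\gamma_t) H_t + \rho_t\gamma_t \cdot \frac{u_t}{\gamma_t},
\end{align}
which is a convex combination of $H_t$ and $u_t/\gamma_t$. By convexity of $\|\cdot\|^2$ we get
\begin{align}
\E[\|H_{t+1}\|^2] \le (1-\rho_t\gamma_t)\E[\|H_t\|^2] + \rho_t\gamma_t \frac{\E[\|u_t\|^2]}{\gamma_t^2}.
\end{align}
We then use \eqref{eq:bounded_gradu}, namely $\E[\|u_t\|^2]\le c_{q_*}^2$. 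Since $\gamma_t$ is decreasing to $\bar\gamma>0$, we have $\gamma_t\ge\bar\gamma$, so $\E[\|u_t\|^2]/\gamma_t^2 \le c_{q_*}^2/\bar\gamma^2$. The right-hand side is therefore a convex combination of $\E[\|H_t\|^2]$ and $c_{q_*}^2/\bar\gamma^2$, which gives
\begin{align}
\E[\|H_{t+1}\|^2]\le \max\{\E[\|H_t\|^2],\, c_{q_*}^2/\bar\gamma^2\}.
\end{align}
An induction starting at $t_0$ then yields \eqref{eq:boundHnonzero}.

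A small check is needed at the base case: $\E[\|H_{t_0}\|^2]$ must be finite. This follows by induction over the finitely many steps $t<t_0$, using $\|H_{t+1}\|\le |1-\rho_t\gamma_t|\,\|H_t\|+\rho_t\|u_t\|$, the square-integrability of $u_t$ from \eqref{eq:bounded_gradu}, and a deterministic $H_0$.

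For \eqref{eq:bounded_grad_cg2}, we take expectations in \eqref{eq:bounded_grad}:
\begin{align}
\E[\|g_t\|^2]\le C_{q_*}+2\gamma_t^2\E[\|H_t\|^2].
\end{align}
We bound $\gamma_t\le\gamma_0$, and bound $\E[\|H_t\|^2]$ by \eqref{eq:boundHnonzero} for $t\ge t_0$ and by the finite maximum over $t<t_0$ otherwise. This yields a uniform constant $C_g'$.

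There is no real obstacle here. The only points requiring care are that the convexity step needs $\rho_t\gamma_t\in[0,1]$, which is why the argument starts at $t\ge t_0$, and that the finite initial segment must be handled separately. Note also that the argument only uses $\rho_t\gamma_t\le1$ eventually, together with $\gamma_t\ge\bar\gamma$.
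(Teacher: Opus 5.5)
Your proof is correct and follows the paper's argument: you write $H_{t+1}$ as a convex combination of $H_t$ and $u_t/\gamma_t$ with weight $\rho_t\gamma_t\in[0,1]$, apply convexity of $\|\cdot\|^2$ together with $\gamma_t\ge\bar\gamma$ to get $\E[\|H_{t+1}\|^2]\le\max\{\E[\|H_t\|^2],c_{q_*}^2/\bar\gamma^2\}$, and then substitute the resulting bound into \eqref{eq:bounded_grad}. Your explicit handling of the finitely many indices $t<t_0$, where you check that $\E[\|H_t\|^2]$ is finite, fills in a detail that the paper leaves implicit.
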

\begin{proof}
For 
\eqref{eq:boundHnonzero} the proof is similar 
to that of \eqref{eq:boundHzero}
as we obtain by convexity that~:
$\E[\|H_{t+1}\|^2 ] \le
\max \{ \E[\| H_{t}\|^2 ], c_{q_*}^2 /\gamma_t^2\} \le 
\max \{ \E[\| H_{t}\|^2 ], c_{q_*}^2 /\bar{\gamma}^2\}
$ and conclude as before.
Inequality~\eqref{eq:bounded_grad_cg2} results from  \eqref{eq:bounded_grad} and the boundedness of $\E[\| H_{t}\|^2 ]$.
\end{proof}

\subsection{Proof of Proposition \ref{prop:summability}}
\begin{proof}
It was noted that cf. \cite{anicta2024convergence}(Eqn (29))\footnote{For any operator $\Fcal: \R^k \to \R$ we denote by $\nabla_H^2 \Fcal$
its second differential which acts from $\R^k\times \R^k$ to $\R$ by:
$\nabla_H^2 \Fcal(H_1,H_2) = H_1^T \Hcal_\Fcal H_2$ where $\Hcal_\Fcal$ is the Hessian matrix of $\Fcal$. When there is no ambiguity we also write 
$\nabla^2 \Fcal$.
}
denoting $c_\star= \frac{\max_a q_*(a) + \min_a q_*(a)}{2} >0$~: 
\begin{align} 
| \nabla_H^2{\cal L}_0(\bar{H})(\delta H,\delta H)|\leq c_\star\| \delta H\|^2, \quad \forall \delta H, \bar{H}\in \mathbb{R}^k.  
\end{align}
\noindent
Taylor's formula allows to write~:
\begin{align}
 {\cal L}_{\gamma _t}(H_{t+1})-{\cal L}_{\gamma _t}(H_t)-\langle \nabla _H {\cal L}_{\gamma _t}(H_t),H_{t+1}-H_t\rangle
\displaystyle =\frac{1}{2}(H_{t+1}-H_t)^T\nabla _H^2{\cal L}_{\gamma _t}(\bar{H}_t)(H_{t+1}-H_t),    
\end{align}
\noindent
and consequently
\begin{align}
\rho_t\langle \nabla _{H}{\cal L}_{\gamma _t}(H_t), u_t-\gamma _tH_t\rangle -\frac{c_\star}{2}\|H_{t+1}-H_t\|^2\leq 
{\cal L}_{\gamma _t}(H_{t+1})-{\cal L}_{\gamma _t}(H_t).
\end{align}

\noindent
That implies
\begin{align}
-c_\star[\rho _t^2\| u_t\|^2+\rho _t^2\gamma _t^2\| H_t\|^2]+\rho_t\langle \nabla _{H}{\cal L}_{\gamma _t}(H_t), u_t-\gamma _tH_t\rangle 
\leq {\cal L}_{\gamma _{t+1}}(H_{t+1})-{\cal L}_{\gamma _t}(H_t)+\frac{\gamma _{t+1}-\gamma _t}{2}\| H_{t+1}\|^2
.\end{align}

\noindent
Using the conditional expectation with respect to ${\cal F}_t$ and then $\mathbb{E}$ we get that
\begin{align}
    &
  -\rho _t^2C_1+\rho _t\mathbb{E}[\|\nabla _H{\cal L}_{\gamma _t}(H_t)\|^2]
+\frac{\gamma _t-\gamma _{t+1}}{2}\mathbb{E}[\| H_{t+1}\|^2]
\le \mathbb{E}[{\cal L}_{\gamma _{t+1}}(H_{t+1})]-\mathbb{E}[{\cal L}_{\gamma _t}(H_t)],  
\end{align}
for any $t\in \mathbb{N}$, where $C_1$ is some positive constant. 
We obtain
\begin{align}
\rho _t\mathbb{E}[\|\nabla _H{\cal L}_{\gamma _t}(H_t)\|^2]
+\frac{\gamma _t-\gamma _{t+1}}{2}\mathbb{E}[\| H_{t+1}\|^2]
\leq \mathbb{E}[{\cal L}_{\gamma _{t+1}}(H_{t+1})]-\mathbb{E}[{\cal L}_{\gamma _t}(H_t)]+C_1 \rho_t^2.    
\label{eq:inequality_nabla_grad_proof1}
\end{align}
Since $\{ \mathbb{E}[{\cal L}_{\gamma _t}(H_t)]\} _{t\in \mathbb{N}}$ is bounded, $\gamma_t$ decreasing and $\rho_t^2$ summable 
we obtain conclusions~\eqref{eq:summability} and~\eqref{eq:summability_dgammaH2}. 
For \eqref{eq:summability_dgammaH2t} we use that $E[\|H_{t+1}-H_t\|^2] \le \rho^2_t C_g$ which is also summable. 
Since on the other hand 
Hypothesis~\eqref{eq:hyp_rhot} implies that $\Lambda_T \to \infty$ so that 
$1/\Lambda_T \to 0$ we obtain also~\eqref{eq:meancv}. Finally, arguing by contradiction we also get that on a subsequence
\begin{align}
  \mathbb{E}[\|\nabla _H{\cal L}_{\gamma _{t_l}}(H_{t_l})\|^2]\longrightarrow 0,
\end{align}
i.e.
$\nabla _H{\cal L}_{\gamma _{t_l}}(H_{t_l})\longrightarrow 0$ in $L^2$, 
which is another way to express \eqref{eq:liminf}.
\end{proof}

\subsection{Lemma~\ref{lemma:seriesXnYn}}

\begin{lemma}
Let $(X_n)_{n\ge 1}$ and $(Y_n)_{n\ge 1}$ be sequences of real-valued random variables with 
$
\sum_{n=1}^\infty \mathbb{E}\!\left[\,|X_n|^2\,\right] < \infty$, 
$
\sum_{n=1}^\infty \mathbb{E}\!\left[\,|Y_n|^2\,\right] < \infty$.
 Then
$
\sum_{n=1}^\infty \mathbb{E}[|X_n Y_n|] < \infty
$.
\label{lemma:seriesXnYn}
\end{lemma}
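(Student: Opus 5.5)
The argument is a pointwise inequality followed by term-by-term summation; no result from earlier in the paper is needed. The plan is to apply the elementary inequality $|xy| \le \frac{1}{2}(x^2+y^2)$, valid for all real $x,y$, to each pair of random variables. This inequality follows from $(|x|-|y|)^2 \ge 0$. It gives, almost surely and for every $n\ge 1$,
\begin{align}
|X_n Y_n| \le \frac{1}{2}\left(|X_n|^2 + |Y_n|^2\right).
\end{align}

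Next, take expectations. Both sides are nonnegative, so the expectations are well defined in $[0,\infty]$, and monotonicity and linearity of the expectation give
\begin{align}
\E[|X_nY_n|] \le \frac{1}{2}\left(\E[|X_n|^2] + \E[|Y_n|^2]\right).
\end{align}
The right-hand side is finite for each $n$, because each term of a convergent series of nonnegative numbers is finite.

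Finally, sum over $n$. All terms are nonnegative, so the partial sums of $\sum_n \E[|X_nY_n|]$ are bounded by $\frac{1}{2}\left(\sum_n \E[|X_n|^2] + \sum_n \E[|Y_n|^2]\right)$, which is finite by hypothesis. A series of nonnegative terms with bounded partial sums converges, and this gives the claim.

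Alternatively, one could apply Cauchy--Schwarz twice: first $\E[|X_nY_n|]\le \E[|X_n|^2]^{1/2}\E[|Y_n|^2]^{1/2}$, then Cauchy--Schwarz on the series. This yields the sharper bound $\sum_n\E[|X_nY_n|]\le \bigl(\sum_n \E[|X_n|^2]\bigr)^{1/2}\bigl(\sum_n \E[|Y_n|^2]\bigr)^{1/2}$. There is no real obstacle in either route; the only points needing care are nonnegativity, which justifies manipulating possibly infinite expectations, and the finiteness of each term.
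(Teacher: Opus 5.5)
Your proof is correct. Your main argument differs from the paper's in the key inequality. The paper follows exactly the route you list as an alternative: first Cauchy--Schwarz in expectation, $\mathbb{E}[|X_nY_n|]\le \mathbb{E}[X_n^2]^{1/2}\,\mathbb{E}[Y_n^2]^{1/2}$, then Cauchy--Schwarz for the series. This gives the bound $\bigl(\sum_n \mathbb{E}[X_n^2]\bigr)^{1/2}\bigl(\sum_n \mathbb{E}[Y_n^2]\bigr)^{1/2}$.

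Your pointwise inequality $|xy|\le\frac12(x^2+y^2)$ is more elementary. It needs only monotonicity and linearity of the expectation on nonnegative variables. It bounds the sum by the arithmetic rather than the geometric mean of the two sums, and that bound is not invariant under the rescaling $X_n\mapsto\lambda X_n$, $Y_n\mapsto Y_n/\lambda$, which leaves $X_nY_n$ unchanged. If you apply your inequality to $\lambda X_n$ and $Y_n/\lambda$ and minimize $\frac12\bigl(\lambda^2\sum_n\mathbb{E}[X_n^2]+\lambda^{-2}\sum_n\mathbb{E}[Y_n^2]\bigr)$ over $\lambda>0$, you recover the paper's bound exactly. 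So the two routes are essentially equivalent, and since the lemma only asserts finiteness, either one suffices.
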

\begin{proof} We repeatedly use the Cauchy--Schwarz inequality:
\begin{align}
& \sum_{n=1}^\infty \mathbb{E}[|X_n Y_n|]
\le 
\sum_{n=1}^\infty 
\sqrt{\mathbb{E}[X_n^2]\,\mathbb{E}[Y_n^2]}
\le 
\left( \sum_{n=1}^\infty \mathbb{E}[X_n^2] \right)^{1/2}
\left( \sum_{n=1}^\infty \mathbb{E}[Y_n^2] \right)^{1/2} < \infty.
\end{align}
\end{proof}

\subsection{Summability Lemma~\ref{lemma:summability_seq_cv}}

\begin{lemma}
    Let $(\alpha_n)_{n\ge 1} \subset \R$ be a sequence with 
    $\alpha_n \ge 0$ and assume that there exists a
    sequence $(\beta_n)_{n\ge 1} \subset \R$  such that
\begin{align}
\forall n \ge 1: \ 
    \alpha_{n+1} - \alpha_n \le \beta_n.
\label{eq:alphabeta}
\end{align}
If $\sum_{n = 1}^\infty |\beta_n| < \infty$
then the sequence $(\alpha_n)_{n\ge 1}$ converges to some $\bar{\alpha}\in [0, \alpha_1+ \sum_{n = 1}^\infty \beta_n]$.
\label{lemma:summability_seq_cv}
\end{lemma}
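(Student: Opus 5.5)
The plan is to telescope the one-step inequality \eqref{eq:alphabeta} and then separate the summable perturbation from a monotone remainder. Set $S_n := \sum_{m=1}^{n-1} \beta_m$ with $S_1=0$. Because $\sum_n |\beta_n| < \infty$, the series $\sum_n \beta_n$ converges absolutely. Hence $S_n \to S_\infty := \sum_{m=1}^\infty \beta_m$, a finite real number.

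Next define the auxiliary sequence $\theta_n := \alpha_n - S_n$. The hypothesis $\alpha_{n+1}-\alpha_n \le \beta_n$ gives $\theta_{n+1}-\theta_n = \alpha_{n+1}-\alpha_n - \beta_n \le 0$, so $(\theta_n)$ is nonincreasing. It is also bounded below, since $\alpha_n \ge 0$ yields $\theta_n \ge -S_n \ge -\sum_m |\beta_m|$. A nonincreasing sequence bounded below converges, so $\theta_n \to \bar\theta$. Then $\alpha_n = \theta_n + S_n$ converges to $\bar\alpha := \bar\theta + S_\infty$.

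For the localization of the limit, nonnegativity passes to the limit, so $\bar\alpha \ge 0$. Monotonicity gives $\bar\theta \le \theta_1 = \alpha_1$, hence $\bar\alpha \le \alpha_1 + \sum_{n=1}^\infty \beta_n$. Equivalently, telescoping gives $\alpha_n \le \alpha_1 + S_n$, and one lets $n\to\infty$.

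There is no real obstacle in this argument. The only point needing care is using absolute summability of $\beta_n$ rather than mere summability. It is what makes $S_n$ bounded below, which in turn gives the lower bound on $\theta_n$ used to obtain convergence rather than divergence to $-\infty$.
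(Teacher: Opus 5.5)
Your proof is correct, but it takes a different route from the paper.

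The paper first shows that $0\le\alpha_n\le\alpha_1+\sum_n|\beta_n|$. It then rules out two distinct accumulation points $\bar\alpha_1<\bar\alpha_2$ by contradiction. It picks $n_2>n_1$ far enough out that the tail of $\sum_n|\beta_n|$ is below $\epsilon/2$, with $\alpha_{n_1}$ near $\bar\alpha_1$ and $\alpha_{n_2}$ near $\bar\alpha_2$. The telescoped one-sided inequality then caps the rise $\alpha_{n_2}-\alpha_{n_1}$ by that tail, which is too small to bridge the gap $\epsilon=\bar\alpha_2-\bar\alpha_1$.

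You instead subtract the accumulated perturbation and observe that $\theta_n=\alpha_n-S_n$ is nonincreasing and bounded below, so monotone convergence does all the work. This is the deterministic analogue of the compensation idea behind Robbins--Siegmund-type results, and of the Doob argument the paper uses in the proof of Lemma~\ref{lemma:cv_pi}.

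Your version is shorter. It also gives the stated localization $\bar\alpha\le\alpha_1+\sum_n\beta_n$ directly from $\bar\theta\le\theta_1=\alpha_1$. The paper only exhibits the cruder bound $\alpha_1+\sum_n|\beta_n|$ on the sequence and leaves the sharper bound on the limit implicit. The paper's oscillation argument, for its part, needs no auxiliary sequence.

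One small correction to your closing remark. The lower bound $\theta_n\ge -S_n$ requires $S_n$ to be bounded \emph{above}, not below. Plain convergence of $\sum_n\beta_n$ already provides this, so absolute summability is more than your argument uses. For the convergence of $(\alpha_n)$ alone, replacing $\beta_n$ by $\beta_n^+$ shows that $\sum_n\beta_n^+<\infty$ suffices.
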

\begin{proof}
Summing the inequalities in ~\eqref{eq:alphabeta} we obtain that 
$\alpha_n- \alpha_1 \le \sum_{k=1}^n \beta_k < \infty$ thus the sequence $(\alpha_n)_{n\ge 1}$  is bounded from above by 
$\alpha_1+ \sum_{n = 1}^\infty |\beta_n| < \infty$ and from below by $0$. Assume that 
is has two distinct accumulation points $\bar{\alpha}_1<\bar{\alpha}_2$ and denote $\epsilon = \bar{\alpha}_2-\bar{\alpha}_1$. Since the series of general term $\beta_n$ is absolutely convergent, there exists $n_0$ such that for any $n_1, n_2 \ge n_0$
$\sum_{n= n_1}^{n_2} |\beta_n| <  \frac{\epsilon}{2}$. Since $\bar{\alpha}_1$ is an accumulation point there exists some $n_1> n_0$  such that $|\alpha_{n_1}- \bar{\alpha}_1 | < \epsilon/4$
and some $n_2 > n_1$ such that 
$|\alpha_{n_2}- \bar{\alpha}_2 | < \epsilon/4$. But then
\begin{align}
   & \epsilon =  \bar{\alpha}_2-\bar{\alpha}_1 \le 
   |\bar{\alpha}_2-\alpha_{n_2} | +  |\alpha_{n_2}-\alpha_{n_1}| +
   |\alpha_{n_1}- \bar{\alpha}_1 |
   < \frac{\epsilon}{4} + \sum_{n= n_1}^{n_2-1} |\beta_n| + \frac{\epsilon}{4} < \epsilon,
\end{align}
which is a contradiction. Therefore there is only one accumulation point and, since the sequence is bounded, it converges.
\end{proof}

\subsection{Proof of Theorem~\ref{thm:cv_nabla}}

\begin{proof}
\noindent 
{\bf Proof of assertion~\eqref{eq:limLgammat}:} 
Denote $G_t= \| \nabla\mathcal{L}_{0}(H_{t})-\gamma_{t}H_{t} \|^2$. 
To analyze the difference  $G_{t+1}-G_t$ we invoke the identity:
$    ||\zeta||^{2}-||\nu||^{2}=||\zeta-\nu||^{2} + 2\langle \zeta-\nu, \nu\rangle$,  true for any vectors $\zeta$, $\nu$ ; we choose:
$\zeta = \nabla \mathcal{L}_{0}(H_{t+1})-\gamma_{t+1}H_{t+1}$ and $\nu = \nabla \mathcal{L}_{0}(H_{t})-\gamma_{t}H_{t}$ and thus:
\begin{align}
&G_{t+1}-G_t
=
\underbrace{
||\nabla \mathcal{L}_{0}(H_{t+1})-\nabla \mathcal{L}_{0}(H_{t})-(\gamma_{t+1}H_{t+1}-\gamma_{t}H_{t})||^{2}}_
{
\text{term (I)}
}  \nonumber \\&
+ 2 \ \ \underbrace{\langle 
\nabla \mathcal{L}_{0}(H_{t+1})-\nabla \mathcal{L}_{0}(H_{t})-(\gamma_{t+1}H_{t+1}-\gamma_{t}H_{t}), \nabla \mathcal{L}_{0}(H_{t})-\gamma_{t}H_{t} \rangle 
}_{\text{term (II)}}.
\label{eq:diff_Gt}
\end{align}
For the term (I) we obtain:
\begin{align}
&  \E[  ||\nabla \mathcal{L}_{0}(H_{t+1})-\nabla \mathcal{L}_{0}(H_{t})-(\gamma_{t+1}H_{t+1}-\gamma_{t}H_{t})||^{2} ]
\nonumber \\  &
\le
3 \E \left\{||\nabla\mathcal{L}_{0}(H_{t+1})-\nabla\mathcal{L}_{0}(H_{t})||^{2} 
+(\gamma_{t}-\gamma_{t+1})^{2}||H_{t+1}||^{2}+\gamma_{t}^{2}||H_{t}-H_{t+1}||^{2} \right\}
\nonumber \\  &
\le
3 \E \left\{ (L_{\nabla\mathcal{L}_{0}} + \gamma_0^2) ||H_{t}-H_{t+1}||^{2} + \gamma_0 (\gamma_{t}-\gamma_{t+1})||H_{t+1}||^{2}
\right\}, \label{eq:delta_gt}
\end{align}
where $L_{\nabla\mathcal{L}_{0}}$ is the Lipschitz constant of 
$\nabla\mathcal{L}_{0}(H) $, which is finite because if only involves components of $\Pi _H$. 
The second term  of the last expression 
is member of a summable series as proved  in \eqref{eq:summability_dgammaH2}. The first term 
$ \E[(L_{\nabla\mathcal{L}_{0}} + \gamma_0^2) ||H_{t}-H_{t+1}||^{2}]$ is the same as $ \rho_t ^2 \E[(L_{\nabla\mathcal{L}_{0}} + \gamma_0^2) ||g_{t}||^{2}] $ and is summable because $\E[||g_t||^{2}] $ is bounded as proven in~\eqref{eq:bounded_grad_cg} and  $\sum_t \rho_t^2 < \infty$ (hypothesis $\textbf{Hyp}_\rho$ in~\eqref{eq:hyp_rhot}). Thus the term (I) is a member of a summable series.

\noindent Let us now discuss term (II) in \eqref{eq:diff_Gt} that can be decomposed as 
\begin{align}
&\langle 
\nabla \mathcal{L}_{0}(H_{t+1})-\nabla \mathcal{L}_{0}(H_{t})-(\gamma_{t+1}H_{t+1}-\gamma_{t}H_{t}), \nabla \mathcal{L}_{0}(H_{t})-\gamma_{t}H_{t} \rangle
\nonumber \\&
=\underbrace{\langle
\nabla \mathcal{L}_{0}(H_{t+1})-\nabla \mathcal{L}_{0}(H_{t}), 
\nabla \mathcal{L}_{0}(H_{t})-\gamma_{t}H_{t} \rangle}_{\text{term (*)}} 
\nonumber \\&
+
\underbrace{
\langle \gamma_{t}H_{t}-\gamma_{t+1}H_{t+1}, \nabla \mathcal{L}_{0}(H_{t})-\gamma_{t}H_{t} \rangle}_{\text{term (**)}}  ~.
\end{align}
For the term (*) we can write~:
\begin{align}
& \E[
\langle
\nabla \mathcal{L}_{0}(H_{t+1})-\nabla \mathcal{L}_{0}(H_{t}), 
\nabla \mathcal{L}_{0}(H_{t})-\gamma_{t}H_{t} \rangle
] 
\nonumber \\&
=\E[
\langle
\nabla^2 \mathcal{L}_{0}(H_t) (H_{t+1}-H_t) 
\nonumber \\ &
+ 
\frac{1}{2}\nabla^3 \mathcal{L}_{0}(\bar{H}) (H_{t+1}-H_t,H_{t+1}-H_t), 
\nabla \mathcal{L}_{0}(H_{t})-\gamma_{t}H_{t} \rangle
] 
\nonumber \\&
=\E[\E[
\langle
\nabla^2 \mathcal{L}_{0}(H_t) (H_{t+1}-H_t), 
\nabla \mathcal{L}_{0}(H_{t})-\gamma_{t}H_{t} \rangle | \Fcal_t
] 
]
\nonumber \\&
+\E[
\langle
\frac{1}{2}\nabla^3 \mathcal{L}_{0}(\bar{H}) (H_{t+1}-H_t,H_{t+1}-H_t), 
\nabla \mathcal{L}_{0}(H_{t})-\gamma_{t}H_{t} \rangle
] 
\nonumber \\&
= \rho_t \E[
\langle
\nabla^2 \mathcal{L}_{0}(H_t) (\nabla \mathcal{L}_{0}(H_{t})-\gamma_{t}H_{t}), 
\nabla \mathcal{L}_{0}(H_{t})-\gamma_{t}H_{t} \rangle 
]
\nonumber \\&
+\E[
\langle
\frac{1}{2}\nabla^3 \mathcal{L}_{0}(\bar{H}) (H_{t+1}-H_t,H_{t+1}-H_t), 
\nabla \mathcal{L}_{0}(H_{t})-\gamma_{t}H_{t} \rangle
] 
\nonumber \\&
\le c_1 \rho_t \E[\|\nabla \mathcal{L}_{0}(H_{t})-\gamma_{t}H_{t}\|^2]
+ 
c_2 \E[ \rho_t ^2
\|g_t\|^2 \cdot \|
\nabla \mathcal{L}_{0}(H_{t})-\gamma_{t}H_{t} \|
], 
\end{align}
with $c_1$ and $c_2$ some constants depending only on $q_*$.
In the last expression, the first part is the general term of a convergent series as proven in \eqref{eq:summability}. The second part is also summable because $\rho_t^2$ is summable and 
$\E[
\|g_t\|^2 \cdot \|
\nabla \mathcal{L}_{0}(H_{t})-\gamma_{t}H_{t} \|
] \le
\E[
\|g_t\|^3]^{2/3} \cdot \E[\|
\nabla \mathcal{L}_{0}(H_{t})-\gamma_{t}H_{t} \|^3
]^{1/3}$ is bounded because of the hypothesis \eqref{eq:hyp_third_moment}\footnote{We also use a variant of Lemma~\ref{lemma:bounded_gH_gt} which ensures that the third order moment of $g_t$ and $\gamma_{t}H_{t}$ are both bounded; the proof being totally analogous to that before we do not give it here.}.
Thus term (*)  is absolutely summable. 

\noindent
We are now left with term (**) that can be further decomposed as~:
\begin{align}
    \underbrace{
\langle \gamma_{t+1}(H_{t}-H_{t+1}), \nabla \mathcal{L}_{0}(H_{t})-\gamma_{t}H_{t} \rangle}_{\text{term }(\dagger)}  
+
    \underbrace{
\langle (\gamma_{t}-\gamma_{t+1})H_{t}, \nabla \mathcal{L}_{0}(H_{t})-\gamma_{t}H_{t} \rangle}_{\text{term }(\ddagger)}.  
\end{align}
The average of  
($\dagger$) is non-positive, while, arguing as before, the average of ($\ddagger$) is upper bounded by $(\gamma_t - \gamma_{t+1}) \cdot \frac{\E [ \| H_{t}\|^2] + 
\E[ \| \nabla \mathcal{L}_{0}(H_{t})-\gamma_{t}H_{t}  \|^2] }{2}$ which is summable.
Coming back to (\ref{eq:diff_Gt}) we can summarize the whole analysis by saying that $\E[G_{t+1}] - \E[G_t]$ is upper bounded by 
the general term of an absolutely summable series. Now we can apply Lemma~\ref{lemma:summability_seq_cv}
to conclude that $\E[G_t]$ is a converging sequence. But, from~\eqref{eq:liminf} its liminf is zero so the whole sequence is converging to zero, 
which ends the proof of assertion~\eqref{eq:limLgammat}.

\noindent
{\bf Proof of assertions \eqref{eq:summability_L0gtHt}, \eqref{eq:gtHt2_cv}, \eqref{eq:lim_gammatHt} and \eqref{eq:limL0}} (\eqref{eq:limLgammat} is a consequence of 
and  \eqref{eq:lim_gammatHt} and \eqref{eq:limL0}): 
From the summability \eqref{eq:summability_dgammaH2t} and relation \eqref{eq:gamma_hyp2} we obtain that 
\begin{align}
 \sum_{t=0}^{\infty } \rho_t \gamma_t^2\mathbb{E}[ \| H_{t}\|^2]<\infty
 \label{eq:summabilitygHH},
\end{align}
and then, since 
$\nabla _H{\cal L}_{\gamma _t}(H_t) = \nabla _H{\cal L}_0(H_t) -{\gamma _t}H_t$ we recover 
\eqref{eq:summability_L0gtHt} from \eqref{eq:summability}.
Denote
\begin{align}
 \Gcal(H)= \| \nabla\mathcal{L}_{0}(H)\|^2.   
\end{align}
The gradient $\nabla_H \Gcal$ is a Lipschitz function of $H$, we  denote $L_\Gcal$ its Lipschitz constant, and thus
\begin{align}
&   |\E [\Gcal(H_{t+1})] -\E [\Gcal(H_{t})] |\le  \E [\langle \nabla  \left\{ \|\nabla  \mathcal{L}_{0}(H_t)\|^2 \right\}, H_{t+1}-H_t\rangle] 
\nonumber \\ &
+ L_\Gcal \E [\| H_{t+1}-H_t \|^2]
\nonumber \\ &
=  \E [ \E [ \langle\nabla  \left\{ \|\nabla  \mathcal{L}_{0}(H_t)\|^2 \right\}, H_{t+1}-H_t\rangle |\Fcal_t] ]+ L_\Gcal \E [\| H_{t+1}-H_t \|^2]
\nonumber \\ &
= \rho_t \E [\langle \nabla  \left\{ \|\nabla  \mathcal{L}_{0}(H_t)\|^2 \right\},  \nabla \mathcal{L}_{0}(H_t)-\gamma_t H_t\rangle ]+ L_\Gcal \E [\| H_{t+1}-H_t \|^2]
\nonumber \\ &
\le c_1 \rho_t \E [\|\nabla  \mathcal{L}_{0}(H_t)\| \cdot \| \nabla \mathcal{L}_{0}(H_t)-\gamma_t H_t \| ]+ c_2 \rho_t^2,
\label{eq:in_proofL0cv}
\end{align}
where $c_1$ and $c_2$ are constants depending only on $q_*$.
Recall that series $\sum_t \rho_t^2$ is convergent, same as 
$\sum_t \rho_t \E [\|\nabla  \mathcal{L}_{0}(H_t)\|^2]$ 
and
$\sum_t \rho_t \E [ \| \nabla \mathcal{L}_{0}(H_t)-\gamma_t H_t \|^2]$.
By Lemma~\ref{lemma:seriesXnYn} we obtain that 
$ |\E [\Gcal(H_{t+1})] -\E [\Gcal(H_{t})] |$ is bounded by a convergent series and by Lemma~\ref{lemma:summability_seq_cv} we obtain that 
$\E[ \Gcal(H_{t})]$ is convergent. From  
\eqref{eq:summability_L0gtHt} the only possible limit is zero which proves \eqref{eq:limL0}.

\noindent 
To prove \eqref{eq:gtHt2_cv} we write:
\begin{align}
& \E [\gamma_{t+1} \|H_{t+1} \|^2] -\E [\gamma_{t}\|H_{t}\|^2] \le \E [ \gamma_{t} (\|H_{t+1} \|^2 - \|H_{t} \|^2) ]
\nonumber \\ &
= 
   \gamma_{t} \E \left[ (\|H_{t+1}  - H_{t} \|^2)  
+ 2 \langle H_{t+1}-H_{t},H_{t}\rangle \right]
\nonumber \\ &
=  \gamma_{t} \E \left[ (\|H_{t+1}  - H_{t} \|^2)  
+ 2 \rho_t \langle \nabla \mathcal{L}_{0}(H_t)-\gamma_t H_t,H_{t}\rangle 
 \right] \textrm{ (we used conditioning by} \Fcal_t \textrm{ )}
\nonumber \\ &
\le C_g \gamma_t \rho_t^2  + 2 \rho_t    \E [\| \nabla \mathcal{L}_{0}(H_t)-\gamma_t H_t \|\cdot \gamma_t \|H_t \|],
\end{align}
and, using 
 \eqref{eq:summabilitygHH}
we conclude as in \eqref{eq:in_proofL0cv} that $\E [\gamma_{t+1} \|H_{t+1} \|^2]$ converges. Since $\gamma_t \to 0$ we obtain \eqref{eq:lim_gammatHt}. Analogous arguments show that 
$\E[ \Lcal_0(H_t)]$ is also converging and hence $\E[ \Lcal_{\gamma_t}(H_t)]$ as difference of two converging sequences. 

We now turn to \eqref{eq:lim_in_Dcal}.  
Denote $\Delta$ the 'gap' $\Delta := \min_{a,b \in [k], a\neq b} |q_*(a) -q_*(b)|$;
from \eqref{eq:hyp_qa_diff_qb} we obtain $\Delta >0$.
Invoking formula (14) from \cite{mei2023stochastic} we can write
\begin{align}
\|\nabla_H \Lcal_0(H_t)\|^2= \sum_{a \in [k]} \Pi_{H_t}(a)^2 (q_*(a) - q_t)^2, 
\end{align}
where $q_t:= \langle q_*,\Pi_{H_t}\rangle$. 
Note that $q_t \in [\min_a q_*(a), \max_a q_*(a)]$ and denote $A_t$ one minimizer of $|q_*(a)-q_t|$  among all arms $a$; in particular $A_t$ is measurable thus a random variable. Moreover, since $\Delta$ is the minimal gap we obtain that $|q_*(a)-q_t| \ge \Delta/2$ whenever $a\neq A_t$. Thus:
\begin{align}
&     \|\nabla_H \Lcal_0(H_t)\|^2= \sum_{a\le k} \Pi_{H_t}(a)^2 (q_*(a) - q_t)^2 \ge  \sum_{a\neq A_t} \Pi_{H_t}(a)^2 (q_*(a) - q_t)^2 \ge \sum_{a\neq A_t} \Pi_{H_t}(a)^2 \left(\frac{\Delta}{2} \right)^2  
\nonumber \\ & 
\ge \left(\frac{\Delta}{2} \right)^2  \cdot \frac{1}{k-1} \cdot \left( \sum_{a\neq A_t} \Pi_{H_t}(a) \right)^2 = 
\left(\frac{\Delta}{2} \right)^2  \cdot \frac{1}{k-1}  \cdot (1-\Pi_{H_t}(A_t))^2 
\nonumber \\ & 
=
\frac{\Delta^2}{8(k-1)} \left[ 
(1-\Pi_{H_t}(A_t))^2 + (1-\Pi_{H_t}(A_t))^2
\right] = 
\frac{\Delta^2}{8(k-1)} \left[ 
(1-\Pi_{H_t}(A_t))^2 + \left(\sum_{a\neq A_t} \Pi_{H_t}(a)\right)^2
\right] 
\nonumber \\ & 
\ge
\frac{\Delta^2}{8(k-1)} \left[ 
(1-\Pi_{H_t}(A_t))^2 +\sum_{a\neq A_t} \Pi_{H_t}(a)^2
\right] 
= 
\frac{\Delta^2}{8(k-1)} \| \Pi_{H_t} - \delta_{A_t}\|^2
\overset{ \text{ from } \eqref{eq:def_dist_Dcal}}{\ge} 
\frac{\Delta^2}{8(k-1)} \| \Pi_{H_t} - \Dcal\|^2,
\label{eq:proof_piht_to_dcal_rhot}
\end{align}
which together with \eqref{eq:limL0} proves \eqref{eq:lim_in_Dcal}.  

To prove \eqref{eq:lim_regret_zero} we use Lemma 3 from \cite{mei_global_2020} that in our setting reads 
$\|\nabla \Lcal_0(H)\| \ge \Pi_H(a^*) \Rcal(\Pi_{H})$. 
Using \eqref{eq:hyp_liminf_piastar_positive} we obtain
$\|\nabla_H \Lcal_0(H_t)\| \ge c_0 \cdot \Rcal(\Pi_{H_t})$ which  implies  \eqref{eq:lim_regret_zero} because of \eqref{eq:limL0}.

To prove \eqref{eq:lim_qstar} we write for a general  distribution $\Pi \in \Pcal_k$:
\begin{align}
& \Rcal(\Pi)^2 = \left( \sum_{b\in [k]}   \Pi(b) (q_*(a^*) -q_*(b)) \right)^2 =
\left( \sum_{b\in [k], b \neq a^*}   \Pi(b) (q_*(a^*) -q_*(b)) \right)^2 
\nonumber \\ & 
\ge \Delta^2 \left( \sum_{b\in [k], b \neq a^*}   \Pi(b) \right)^2 = \frac{\Delta^2}{2} 
\left[ 
\left(\sum_{b\in [k], b \neq a^*}   \Pi(b) \right)^2 
+ \left( \sum_{b\in [k], b \neq a^*}   \Pi(b) \right)^2
\right]
\nonumber \\ & 
\ge
\frac{\Delta^2}{2} 
\left[ (1-\Pi(a^*))^2+
\sum_{b\in [k], b \neq a^*}   \Pi(b)^2 
\right] = \frac{\Delta^2}{2} \cdot
\|\Pi-\delta_{a^*} \|^2,
\label{eq:dem_rhot_regret_dist}
\end{align}
which, 
setting $\Pi=\Pi_{H_t}$ and 
using \eqref{eq:lim_regret_zero}, proves  \eqref{eq:lim_qstar}.
\end{proof}

\subsection{A stronger convergence result for $\Pi_{H_t}$ under more restrictive hypotheses}
\begin{lemma}
    Under assumptions~
\eqref{eq:hyp_second_moment}, 	
\eqref{eq:hyp_rhot},
\eqref{eq:hyp_gamma_decreasing}, \eqref{eq:hyp_qa_diff_qb}, and assuming additionally that the rewards are bounded and that
    \begin{equation}
        \sum_{t=1}^{\infty} \rho_t(\rho_0+\rho_1+\cdots +\rho_{t-1})^2\gamma_t^2 < \infty ,
    \end{equation}
then 
\begin{align}
\Pi_{H_t}\longrightarrow \left( \mathbbm{1}_{F^{-1}(q_*(a))} \right)_{a=1}^k \quad \text{in }  L^2,    
\label{eq:convergene_L2_Dirac_omega}
\end{align}
where
$F$ is a real-valued random variable satisfying $F\in\{q_*(1), q_*(2),..., q_*(k)\}$ a.s..
\label{lemma:cv_pi}
\end{lemma}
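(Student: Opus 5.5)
We will prove that the scalar process $q_t:=\langle q_*,\Pi_{H_t}\rangle=\Lcal_0(H_t)$ converges almost surely and in $L^2$ to some random variable $F$. Once we know this, \eqref{eq:convergene_L2_Dirac_omega} follows. Indeed, \eqref{eq:limL0} holds under the present hypotheses by the proof of \Cref{thm:cv_nabla}: bounded rewards give the third moment assumption, and we argue below that the summability assumption replaces \eqref{eq:gamma_hyp2} in the $\gamma_tH_t$ estimates. Combined with the formula $\|\nabla_H\Lcal_0(H)\|^2=\sum_a\Pi_H(a)^2(q_*(a)-q)^2$ and \eqref{eq:hyp_qa_diff_qb}, this forces $F\in\{q_*(1),\dots,q_*(k)\}$ a.s. On the event $\{F=q_*(a)\}$ we then get $\Pi_{H_t}(b)(q_*(b)-q_t)\to0$ for every $b$. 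For $b\neq a$, $|q_*(b)-q_t|\to|q_*(b)-q_*(a)|\ge\Delta$, so $\Pi_{H_t}(b)\to0$ and $\Pi_{H_t}(a)\to1$. Hence $\Pi_{H_t}(a)\to\one_{F=q_*(a)}$ in probability, and since all quantities are bounded by $1$, also in $L^2$ by dominated convergence. (Convergence in probability suffices, so we may work with subsequences where needed.)

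The main step is the almost sure convergence of $q_t$, which we would obtain by a quasi-martingale / Robbins–Siegmund argument. By the Taylor expansion used in the proof of Proposition~\ref{prop:summability} (with the Hessian bound $c_\star$),
\begin{align}
q_{t+1}-q_t \ge \rho_t\langle\nabla\Lcal_0(H_t),u_t\rangle-\rho_t\gamma_t\langle\nabla\Lcal_0(H_t),H_t\rangle-c_\star\rho_t^2(\|u_t\|^2+\gamma_t^2\|H_t\|^2).
\end{align}
Taking conditional expectations, the first term gives $\rho_t\|\nabla\Lcal_0(H_t)\|^2\ge0$. The martingale part $\rho_t\langle\nabla\Lcal_0(H_t),u_t-\nabla\Lcal_0(H_t)\rangle$ has summable conditional variances because the rewards are bounded and $\sum\rho_t^2<\infty$, so it converges a.s.

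The delicate term is the regularization drift $\rho_t\gamma_t|\langle\nabla\Lcal_0(H_t),H_t\rangle|$. This is where the extra hypothesis enters. Bounded rewards give $\|u_t\|\le C$ deterministically, so the update $H_{t+1}=(1-\rho_t\gamma_t)H_t+\rho_tu_t$ yields the pathwise bound $\|H_t\|\le\|H_0\|+C(\rho_0+\dots+\rho_{t-1})$. We then apply Cauchy--Schwarz:
\begin{align}
\sum_t\rho_t\gamma_t\|\nabla\Lcal_0(H_t)\|\|H_t\|\le\Big(\sum_t\rho_t\|\nabla\Lcal_0(H_t)\|^2\Big)^{1/2}\Big(\sum_t\rho_t\gamma_t^2\|H_t\|^2\Big)^{1/2}.
\end{align}
The second factor is finite surely by the assumed series condition. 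For the first factor, \eqref{eq:summability_L0gtHt} gives finite expectation of $\sum_t\rho_t\|\nabla\Lcal_0(H_t)\|^2$ when \eqref{eq:gamma_hyp2} is available. Without \eqref{eq:gamma_hyp2}, we would instead bound it directly from the same one-step inequality, since $q_t$ is bounded and the drift is controlled by $\tfrac12\rho_t\|\nabla\Lcal_0\|^2+\tfrac12\rho_t\gamma_t^2\|H_t\|^2$. The second-order term is summable since $\rho_t^2\gamma_t^2\|H_t\|^2\le\rho_t\gamma_t^2\|H_t\|^2$ for large $t$.

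Thus $q_t$ is a bounded process whose increments equal a nonnegative part, plus an a.s. convergent martingale, plus an a.s. absolutely summable error. The Robbins–Siegmund theorem therefore gives $q_t\to F$ a.s., with $\sum_t\rho_t\|\nabla\Lcal_0(H_t)\|^2<\infty$ a.s. as a byproduct; bounded convergence upgrades this to $L^2$. Since $\sum\rho_t=\infty$, we get $\liminf_t\|\nabla\Lcal_0(H_t)\|=0$ a.s., so along a subsequence $F$ is a limit point of values where $\sum_a\Pi(a)^2(q_*(a)-q)^2\to0$. This forces $F\in\{q_*(a)\}$ a.s.

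The identification of $F$ with an arm mean is the step I expect to be most fragile. To make it robust, we would use \eqref{eq:limL0} in probability directly rather than only along subsequences. The other obstacle is handling the regularization drift, which is exactly why the extra series hypothesis is imposed.
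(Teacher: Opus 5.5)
Your proposal is correct and follows essentially the paper's route. Bounded rewards give the pathwise bound $\gamma_t\|H_t\| \lesssim \gamma_t(1+\rho_0+\dots+\rho_{t-1})$, which together with the series hypothesis makes the regularization drift in the Taylor lower bound for $\Lcal_0(H_{t+1})$ summable; a martingale convergence argument then yields $\Lcal_0(H_t)\to F$ a.s.; and the $L^2$ vanishing of $\nabla_H\Lcal_0(H_t)$ combined with $\|\nabla_H\Lcal_0(H)\|^2=\sum_a\Pi_H(a)^2(q_*(a)-\Lcal_0(H))^2$ forces $F$ to be an arm mean and concentrates $\Pi_{H_t}$ on the corresponding arm. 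The differences are only technical: the paper absorbs the drift by Young's inequality and applies Doob's theorem to the bounded submartingale $\Lcal_0(H_t)-CS_t$, where you use Cauchy--Schwarz plus Robbins--Siegmund; and your remark that \eqref{eq:limL0} still holds without \eqref{eq:gamma_hyp2}, via summability of $\rho_t\E[\|\nabla_H\Lcal_0(H_t)\|^2]$ from the same one-step inequality, makes explicit a step the paper uses without comment.
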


\begin{proof}
    Since
    $\|H_{t+1}\|\leq \|H_t\|(1-\rho_t\gamma_t)+M\rho_t$ ($M$ is some positive constant) then for any $t> t_0$: 
    \begin{equation}
    \begin{array}{rl}
      \|H_t\| & \leq  \|H_{t_0}\|(1-\rho_{t_0}\gamma_{t_0})(1-\rho_{t_0+1}\gamma_{t_0+1})\cdots(1-\rho_{t-1}\gamma_{t-1}) \\
     ~& \quad +M[\rho_{t-1}+\rho_{t-2}(1-\rho_{t-1}\gamma_{t-1})+\cdots +\rho_{t_0}(1-\rho_{t_0+1}\gamma_{t_0+1})\cdots(1-\rho_{t-1}\gamma_{t-1})] \\
     \Rightarrow \gamma_t\|H_t\| & \leq \gamma_t\|H_{t_0}\|+M\gamma_t(\rho_{t-1}+\rho_{t-2}+\cdots+\rho_0)\\
     \Rightarrow \gamma_t^2\|H_t\|^2 & \leq  \underbrace{\tilde{M}[\gamma_t^2 + \gamma_t^2(\rho_0+\cdots+\rho_{t-1})^2].}_{\text{$a_t$}}
    \end{array}
    \end{equation}
Here $\tilde{M}$ is another positive constant. 
So, $\gamma_t^2\|H_t\|^2 \leq a_t$ a.s., where $\sum_{t=1}^\infty a_t\rho_t < \infty$.

By Taylor's formula we get that
\begin{equation}
    \begin{array}{ll}
    \Lcal_0(H_{t+1}) & = \Lcal_0(H_t)+\langle\nabla\Lcal_0(H_t),H_{t+1}-H_t\rangle+\frac{1}{2}(H_{t+1}-H_t)^T\nabla_H^2\Lcal_0(\bar{H_t})(H_{t+1}-H_t) \\
    ~& \geq \Lcal_0(H_t)+\rho_t\langle\nabla\Lcal_0(H_t),u_t\rangle-\rho_t\gamma_t\langle\nabla\Lcal_0(H_t),H_t\rangle-\frac{c_\star}{2}\|\rho_t u_t-\rho_t\gamma_t H_t\|^2
    \end{array}
\end{equation}

\begin{equation*}
    \begin{array}{ll}
    \Rightarrow \mathbb{E}[\Lcal_0(H_{t+1})|\Fcal_t] & \geq \Lcal_0(H_t)+\rho_t\|\nabla\Lcal_0(H_t)\|^2-\rho_t\|\nabla\Lcal_0(H_t)\|^2-\rho_t \frac{\gamma_t^2}{4}\|H_t\|^2-c\rho_t^2 \\
    ~& \geq \Lcal_0(H_t)-C(\rho_t^2+\rho_t a_t),
    \end{array}
\end{equation*}
where $c$ and $C$ are positive constants.

\noindent
Denote by $S_t=\sum_{l=0}^{t-1}(\rho_l^2+\rho_l a_l)$. Note that $(S_t)_{t\in\mathbb{N^*}}$ is a convergent sequence.
So, $$\mathbb{E}[\Lcal_0(H_{t+1})-CS_{t+1}|\Fcal_t] \geq \Lcal_0(H_t)-CS_t.$$

By Doob's theorem we get that
$$\Lcal_0(H_t)-CS_t \longrightarrow \tilde{F} \quad \text{a.s.},$$
where $\tilde{F}$ is a real-valued random variable.

\noindent
And so $\Lcal_0(H_t)\longrightarrow F\in [\min _a q_*(a), \max_a q_*(a)]$ a.s. .

We have that
\begin{equation*}
    \mathbb{E}[\|\nabla\Lcal_0(H_t)\|^2] = \sum_{a \in [k]}\mathbb{E}[\Pi_{H_t}(a)^2(q_*(a)-\Lcal_0(H_t))^2]
\end{equation*}
\begin{equation}
    \Rightarrow \sum_{a \in [k]}\mathbb{E}[\Pi_{H_t}(a)^2(q_*(a)-F)^2]\longrightarrow 0.
\label{eq_sum_F_conv}
\end{equation}

We prove by contradiction that $F\in\{ q_*(1), q_*(2), ..., q_*(k)\}$ a.s..

\noindent
Assume that on $\Omega_0$ (with $\mathbb{P}(\Omega_0)>0$): 

$(q_*(a)-F)^2>0$ a.s., \quad i.e. \quad $\min \{(q_*(a)-F)^2; a\in[k]\}>0$ a.s. in $\Omega_0$.

By \eqref{eq_sum_F_conv} we get via the Cauchy-Schwarz inequality we get that:
\begin{equation*}    \mathbb{E}\left[\mathbbm{1}_{\Omega_0}\frac{1}{k}\left(\sum_{a \in [k]}\Pi_{H_t}(a)\right)^2 \cdot \min \{(q_*(a)-F)^2; a\in[k]\}\right]\longrightarrow 0
\end{equation*}
\begin{equation*}
\Leftrightarrow \mathbb{E}[\underbrace{\mathbbm{1}_{\Omega_0}\min \{(q_*(a)-F)^2; a\in[k]\}}_{\text{strictly positive on $\Omega_0$}}]\longrightarrow 0,    \quad \text{ which is a contradiction.}
\end{equation*}

So, $F\in\{ q_*(1), q_*(2), ..., q_*(k)\}$ a.s.. 
Let $\Omega_l=F^{-1}(q_*(l))$, $l\in[k]$. 
If $\mathbb{P}(\Omega_l)>0$ then
$$\sum_{a\neq l}\mathbb{E}[\Pi_{H_t}(a)^2(q_*(a)-F)^2\mathbbm{1}_{\Omega_l}]\longrightarrow 0.$$
Using that $\sum_{a\neq l}\Pi_{H_t}(a)^2\geq \frac{1}{k-1}(\sum_{a\neq l}\Pi_{H_t}(a))^2$ and that $(q_*(a)-F)^2\geq \min\{|q_*(a)-q_*(l)|; a\neq l\}^2$ we get
\begin{equation*}
    \begin{array}{l}
    \mathbb{E}[(1-\Pi_{H_t}(l))^2\mathbbm{1}_{\Omega_l}]\longrightarrow 0      \\
    \Rightarrow \mathbbm{1}_{\Omega_l}(1-\Pi_{H_t}(l)) \longrightarrow 0 \quad \text{in } L^2  \Rightarrow \mathbbm{1}_{\Omega_l}\Pi_{H_t}(a) \longrightarrow 0 \quad \text{in } L^2, a\neq l.        
    \end{array}
\end{equation*}

So, $\mathbbm{1}_{\Omega_l}\Pi_{H_t}(l) \longrightarrow \mathbbm{1}_{\Omega_l}$ in $L^2$.

And consequently, $\Pi_{H_t}(a)\longrightarrow \mathbbm{1}_{F^{-1}(q_*(a))}$ in $L^2$, $a\in [k]$ which gives the conclusion 
\eqref{eq:convergene_L2_Dirac_omega}.
\end{proof}

\subsection{Proof of Theorem~\ref{thm:cv_nabla_rhocst}}
\begin{proof}
\noindent 
{\bf Proof of assertion~\eqref{eq:limLgammat_rhocst}:} 

Some part of the proof for $\rho_t$ constant is similar to the case when $\rho_t$ is depending on $t$. More specifically,
using exactly the same techniques as in the proof of Proposition~\ref{prop:summability}
we obtain (\ref{eq:inequality_nabla_grad_proof1}) in the form: 
\begin{align}
\rho\mathbb{E}[\|\nabla _H{\cal L}_{\gamma _t}(H_t)\|^2]
+\frac{\gamma _t-\gamma _{t+1}}{2}\mathbb{E}[\| H_{t+1}\|^2]
\leq \mathbb{E}[{\cal L}_{\gamma _{t+1}}(H_{t+1})]-\mathbb{E}[{\cal L}_{\gamma _t}(H_t)]+C_1 \rho^2.    
\label{eq:inequality_nabla_grad_proof1_rhocst}
\end{align}
Note on the other hand that, as before:

- $\mathbb{E}[{\cal L}_{\gamma_{t}}(H_{t})]$ is bounded from above being a difference of a bounded term and a negative part

- Lemma~\ref{lemma:bounded_gH_gt} remains true when $\rho_t$ is constant i.e., the sequence
$\E[ \gamma_t^2 \|H_t\|^2]$ is bounded and the same is  
$\mathbb{E}[\|\nabla _H{\cal L}_{\gamma _t}(H_t)\|^2]$.

Summing up \eqref{eq:inequality_nabla_grad_proof1_rhocst} up to time $t$ we obtain \eqref{eq:limLgammat_rhocst}.
We also obtain that 
\begin{align}
\sum_{s=1}^t    \frac{\gamma_s-\gamma _{s+1}}{2}\mathbb{E}[\| H_{s+1}\|^2] \le c_{1h} \cdot t \cdot  \rho^2, 
\end{align}
for some positive constant $c_{1h}$ and under hypothesis~\eqref{eq:gamma_hyp2} we obtain:
\begin{align}
&
\frac{1}{t}
\sum_{s=1}^t \mathbb{E}[\|\gamma_s H_s\|^2] =  O(\rho),
\\ &  
\frac{1}{t}
\sum_{s=1}^t \mathbb{E}[\|\nabla _H{\cal L}_{0}(H_s)\|^2] = O(\rho). 
\label{eq:summability_L0gtHt_rho_cst}
\end{align}
Note too that the proof of assertions \eqref{eq:summability_L0gtHt}, \eqref{eq:gtHt2_cv}, \eqref{eq:lim_gammatHt} and \eqref{eq:limL0}
can also adapted to $\rho_t=\rho$.

To prove \eqref{eq:lim_regret_zero_rhocst} we use again Lemma 3 from \cite{mei_global_2020}
in the form $\|\nabla_H \Lcal_0(H_t)\| \ge c_0 \cdot \Rcal(\Pi_{H_t})$ and write:
\begin{align}
    O(\rho)= \frac{1}{t}
\sum_{s=1}^t \mathbb{E}[\|\nabla _H{\cal L}_{0}(H_s)\|^2] \ge
c_0^2 \frac{1}{t} \sum_{s=1}^t\Rcal(\Pi_{H_s})^2 \ge
c_0^2 \left( \frac{1}{t}\sum_{s=1}^t\Rcal(\Pi_{H_s}) \right)^2
=c_0^2 \Rcal(\Pibar_t)^2,
\end{align}
where we used that the regret $\Rcal$ is linear in its argument $\Pi$ which proves \eqref{eq:lim_regret_zero_rhocst}.

On the other hand the conclusion \eqref{eq:lim_qstar_rhocst}
follows from
\eqref{eq:lim_regret_zero_rhocst}
using the inequality~\eqref{eq:dem_rhot_regret_dist} for $\Pi = \Pibar_t$.
\end{proof}


\subsection{Implementation of the entropy regularization: formulaes}
\label{sec:formulas_entropy}

We compared L2 and entropy regularization and, for clarity and because these are not available in the literature and on online code repositories, we give below for reference the formulas that were used for the implementation; this is the convention use in all figures of the paper. 
The objective is to maximize
\begin{align}
    \Lcal(t) = \mathbb{E}[R_t]
            - \frac{\gamma_{\text{L2}}(t)}{2} \| H_t \|_2^2
            + \gamma_{\text{Ent}}(t) \, \Omega(\Pi_{H_t}),    
\label{eq:Lcal_entropy}
\end{align}
where $ \Omega(\Pi) = - \sum_{a\in [k]} \Pi(a) \log \Pi(a) $ is the Shannon entropy and $ \gamma_{\text{L2}}(t), \gamma_{\text{Ent}}(t) \ge 0$ are decay schedules.

Sign conventions: 

- we perform gradient ascent on $ \Lcal(t) $ 

- with the L2 penalty term: $ - \frac{\gamma_{\text{L2}}(t)}{2} \| H_t \|^2 $,  whose gradient contribution is $ - \gamma_{\text{L2}}(t) H_t $, 

- and an entropy term: $  \gamma_{\text{Ent}}(t) \, \Omega(\Pi_{H_t}) $
whose gradient contribution is $  \gamma_{\text{Ent}}(t) \nabla_H \Omega(\Pi_{H_t}) $.

Entropy gradient:
\begin{align}
&       \nabla_{H(b)} \Omega(\Pi_{H_t}) =-\sum_a (\log \Pi_{H_t}(a) + 1) \nabla_{H(b)} \Pi_{H_t}(a) 
= -\sum_a \Pi_{H_t}(a) (\log \Pi_{H_t}(a) + 1) (1_{a=b}-\Pi_{H_t}(b)). 
\end{align}

Recall that action $A_t$ is sampled according to $\Pi_{H_t}$ so the unbiased gradient is given by the formula:
$$
g_t = \left[r_t - \bar{r}_t - \gamma_{\text{Ent}}(t) - \gamma_{\text{Ent}}(t) \log(\Pi_{H_t}) \right]\odot (1_{A_t} - \Pi_{H_t})
        - \gamma_{\text{L2}}(t) H_t , 
$$ 
where $ \bar{r}_t $ is the running average reward (baseline). This gives the full preference update $H_{t+1} = H_t + \rho_t g_t$.
In fact, following  
\cite{mei_global_2020}('Update 2' section 4.2.1)
we replace $1+\log(\Pi_{H_t})$ by $\log(\Pi_{H_t})$ above as it does not change the unbiasedness of the gradient.

\section{Further numerical results}

\subsection{Grid search for best linear decay schedules for $\rho_t$}

We give here the analogue result of \Cref{fig:constant_rho_no_reg} but here for linear decay schedules  $\rho_t= \frac{c_1}{1+c_2 \cdot t}$.
To be  closer to the realistic setting we plot the empirical regret i.e. $\max_a q_*(a) - R_t$ instead of exact regret $\Rcal$; note that empirical regret is, up to a constant, available during training while $\Rcal()$ is not; in practice to choose a decay schedule only such outputs are available and not $\Rcal()$. We checked that comparison between the decay schedules is the same when true regret is used.

\begin{figure}[ht]
  \vskip 0.2in
  \begin{center}
    \centerline{
    \includegraphics[width=0.75\linewidth]{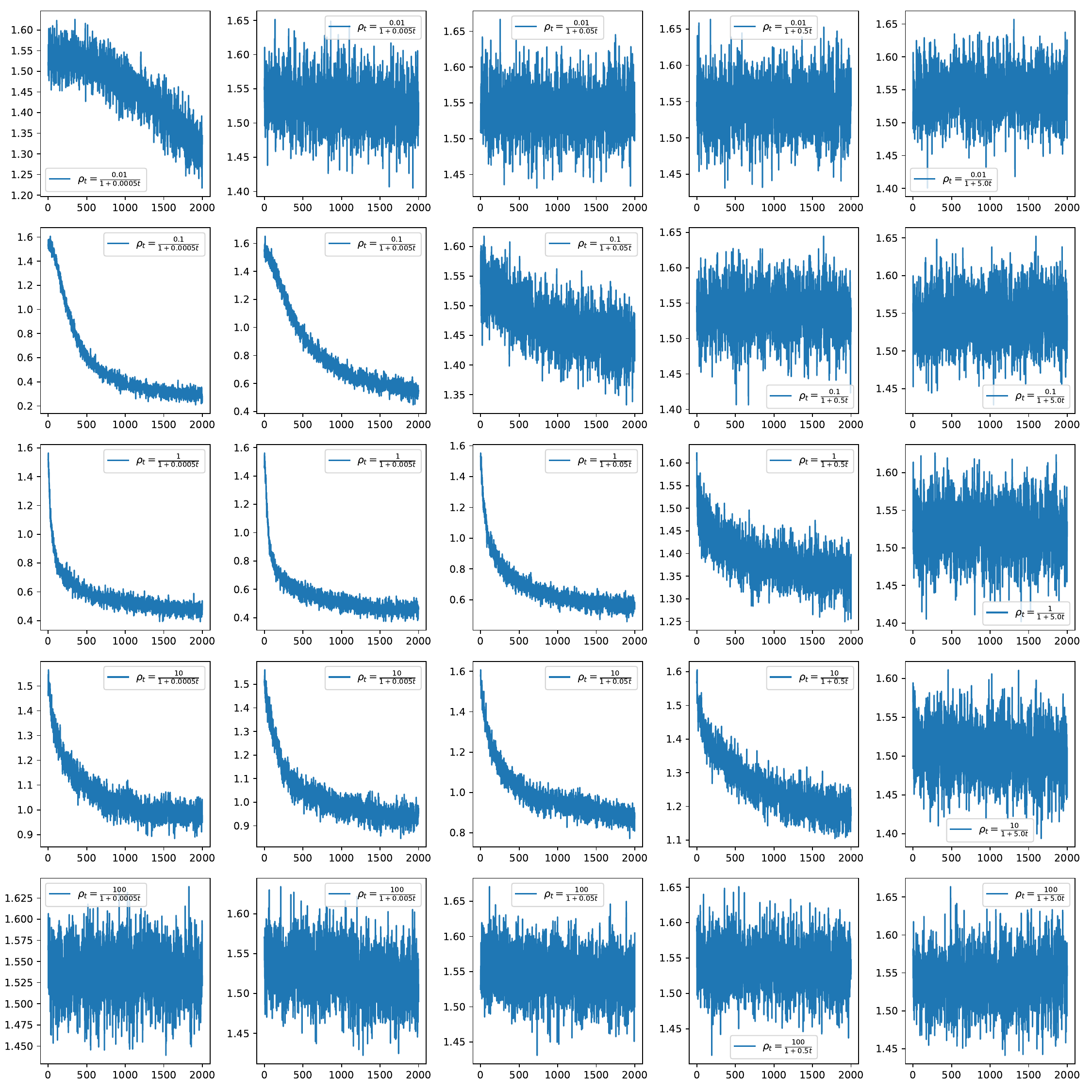}
}

\caption{Extensive grid search for several linear decay schedules of the form $\rho_t= \frac{c_1}{1+c_2 \cdot t}$, both parameters spanning several orders of magnitude. Initial distribution 
$\Pi_{H_0}$ corresponds to 
 $H_0=(5,...,0)$. No regularization (neither L2 nor entropic) is used. The value $\rho_t=\frac{0.1}{1+0.0005\cdot t}$ appears to be the winner. 
 We plot the average empirical regret. This is to be compared with \Cref{fig:constant_rho_no_reg}.
}  \label{fig:grid_search_linear_schedule_rhot}
  \end{center}
\end{figure}

\subsection{Search for best entropy constants}

For the setting in \Cref{sec:baseline}, we give below the empirical results for entropy regularization 
in \Cref{fig:rho01_cst_gamma001to100_entropy};
we set $\rho_t$ with best value obtained in \Cref{fig:constant_rho_no_reg}.
At odds with \Cref{fig:constant_rho_no_reg} we see here that there is no clear winner and the best constant is in the range $0.01$-$1$, with lowest values being best when $t$ is large; accordingly we take for our entropy tests $\gamma_t = \frac{1}{1 + 0.2 t}$ to span all optimal ranges. 

\begin{figure}[ht]
  \vskip 0.2in
  \begin{center}
    \centerline{
    \includegraphics[width=\linewidth]{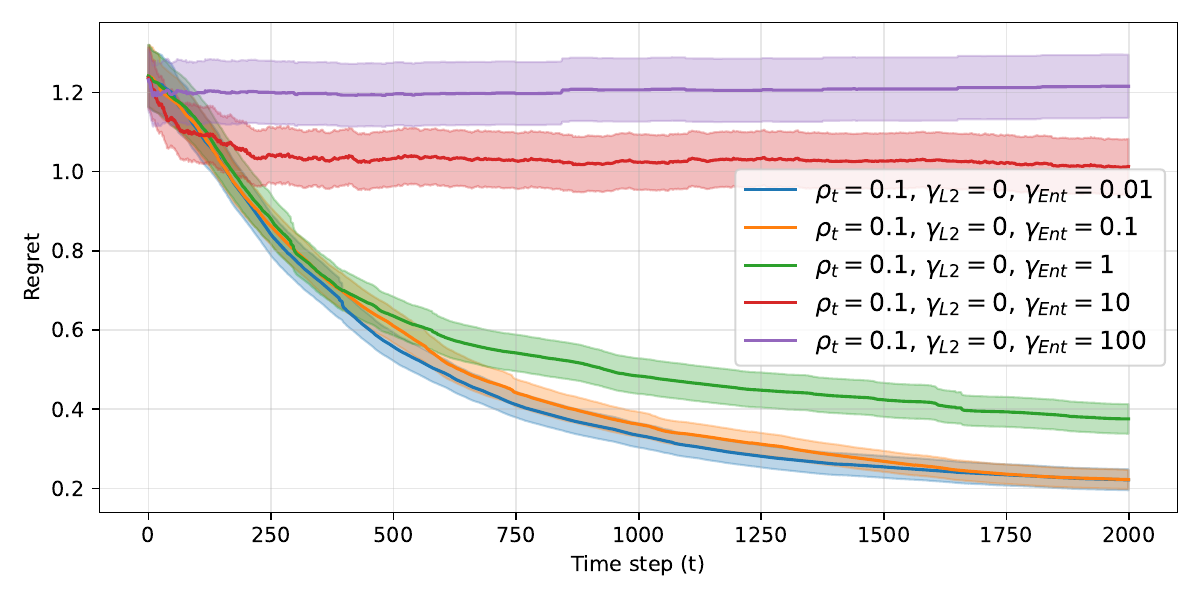}
}
\caption{Search for best $\gamma_t$ constant for entropy.  As before   $H_0=(5,...,0)$. No L2 regularization is used. 
There is no winner and curves seem to overlap quite a bit but 
values in the range 0.01 to 1 seem to belong to the best performing cluster, with small values being increasingly efficient at $t$ gets large. This orients us towards the decay rate  $\gamma_t = \frac{1}{1 + 0.2 t}$ used in latter tests. 
Cf. also  \Cref{fig:constant_rho_no_reg}.
}  \label{fig:rho01_cst_gamma001to100_entropy}
  \end{center}
\end{figure}

\section{Comparison with other classes of algorithms}
\label{sec:comparison_ucb}

One may ask what about other classes of algorithms such as UCB, Thompson Sampling, Adversarial bandit algorithms and so on. The short answer is: we do not compare with other classes of algorithms because our main point here is to investigate the relevance of the L2 regularization within the softmax parameterized policy gradient class, so we are somehow working conditional to the fact that the user has already chosen, for some reason\footnote{There can be many reasons to use softmax MAB over the other procedures, for instance when the number of arms is large, or when the arms averages $q_*$ may drift over time or when the number of arms itself is variable, or when the distributions $R(a)$ of the arms are heavy tailed and so on.}, the softmax policy gradient algorithm and wants, if possible, to improve by regularizing. But, for the curious reader we provide in \Cref{fig:ucb} tests that confirm the common lore for e.g., the UCB algorithm: it performs well when there are not so many arms and the distribution is well behaved; on the contrary the performance deteriorates when the number of arms is increasing and the distribution is heavy tailed. 

\begin{figure}[ht]
    \centerline{
    \includegraphics[width=\linewidth]{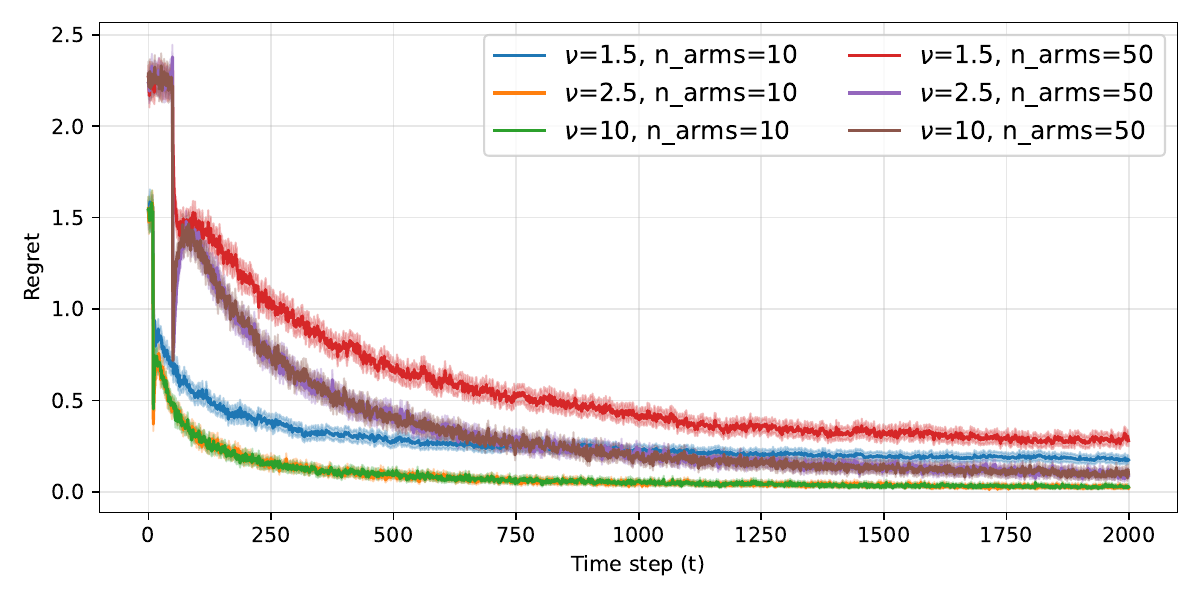}
}
\caption{Performance of the UCB algorithm. The number of degrees of fredom $\nu$ is a proxy for how much the reward distribution is heavy tailed, with $\nu=1.5$ being severely heavy tailed and $\nu=10$ a smooth example. We see that best results are for $\nu=2.5$ and $\nu=10$ and $10$ arms; once we exit this smooth, low arm regime the regret is severely impacted, with results being most sensitive to $\nu$. A quick check with \Cref{fig:cst_rho_compare_L2_reg_entropyk10t15} confirms that L2 regularized MAB outperforms UCB in this case.}
\label{fig:ucb}
\end{figure}

\end{document}